\documentclass[11pt]{article}

\usepackage[preprint]{acl}

\usepackage{times}
\usepackage{bm}
\usepackage{latexsym}
\usepackage{subcaption}
\usepackage{placeins}
\usepackage[dvipsnames]{xcolor}
\usepackage{colortbl}     
\usepackage{makecell}
\usepackage{ulem}            
\definecolor{impGreen}{HTML}{2E7D32}
\definecolor{regRed}{HTML}{C62828} 
\usepackage[T1]{fontenc}

\usepackage[utf8]{inputenc}

\usepackage{microtype}

\usepackage{inconsolata}

\usepackage{graphicx}
\usepackage[most]{tcolorbox}
\usepackage{graphicx}
\usepackage{pifont}
\usepackage{ulem}
\usepackage{enumitem}
\usepackage{xspace}
\usepackage{bbding}
\usepackage{multirow, booktabs}
\usepackage{makecell}
\usepackage{amssymb}
\usepackage{amsmath}
\usepackage{amsthm}
\theoremstyle{definition}
\newtheorem{definition}{Definition}[section]
\usepackage{cleveref}
\usepackage{adjustbox}
\usepackage{listings}
\usepackage{float}
\usepackage{soul}
\usepackage{ulem}
\usepackage{twemojis}
\usepackage[table,svgnames]{xcolor} 
\usepackage{algorithm}
  \usepackage{algorithmic}
  \usepackage{xcolor}

\newtheorem{proposition}[definition]{Proposition}

\definecolor{green(pigment)}{rgb}{0.0, 0.65, 0.31}

\title{Entropy Is Not Enough: Unlocking Effective Reinforcement Learning for Visual Reasoning via Vision-Anchored Token Selection}

\author{
 \textbf{Senjie Jin\textsuperscript{1}\thanks{{ }\ Equal contribution.$^\dagger$Corresponding authors.}},
 \textbf{Peixin Wang\textsuperscript{1}$^{*}$},
 \textbf{Boyang Liu\textsuperscript{1}$^{*}$},
 \textbf{Xiaoran Fan\textsuperscript{1}$^{*}$},
\\
 \textbf{Shuo Li\textsuperscript{1}},
 \textbf{Zhiheng Xi\textsuperscript{1}},
 \textbf{Jiazheng Zhang\textsuperscript{1}},
 \textbf{Yuhao Zhou\textsuperscript{1}},
\\
 \textbf{Tao Gui\textsuperscript{1}$^\dagger$},
 \textbf{Qi Zhang\textsuperscript{1}},
 \textbf{Xuanjing Huang \textsuperscript{1}}
\\
 \textsuperscript{1}College of Computer Science and Artificial Intelligence,
Fudan University \\
 \texttt{\{sjjin24,peixinwang25,boyangliu25\}@m.fudan.edu.cn}, \texttt{tgui@fudan.edu.cn}\\
}

\begin{document}
\maketitle
\begin{abstract}
While token-level entropy is commonly recognized as effective for credit assignment in text-only reinforcement learning with verifiable rewards (RLVR), it remains unclear whether this mechanism still holds in visual reasoning. Our controlled study shows that this mechanism collapses in visual reasoning due to the omission of vision-sensitive tokens with naturally low entropy. 
Although existing multimodal RL methods increasingly acknowledge the importance of visual perception, they struggle to satisfy the inherent demand for interleaving precise perceptual grounding with semantic reasoning, either lacking systematic visual measurements or overlooking that token entropy primarily drives semantic exploration.
To address this, we introduce \textbf{VEPO} (Vision-Entropy token-selection for Policy Optimization), an effective RL framework explicitly integrating \emph{visual sensitivity} with \emph{token entropy} via a principled multiplicative coupling, where VEPO redirects gradient credit toward tokens which are simultaneously visually grounded and highly informative.
Extensive experiments demonstrate VEPO's leading performance, significantly outperforming the entropy-only baseline by 2.28 points at 7B-scale and 3.15 points at 3B-scale. Ablations further substantiate the soundness of our method\footnote{https://github.com/Leonnnnnn929/VEPO}.



\end{abstract}
\begin{figure}[t!]
    \centering
    \begin{subfigure}{\linewidth}
        \centering
        \makebox[\linewidth]{%
            \includegraphics[width=1.1\linewidth]{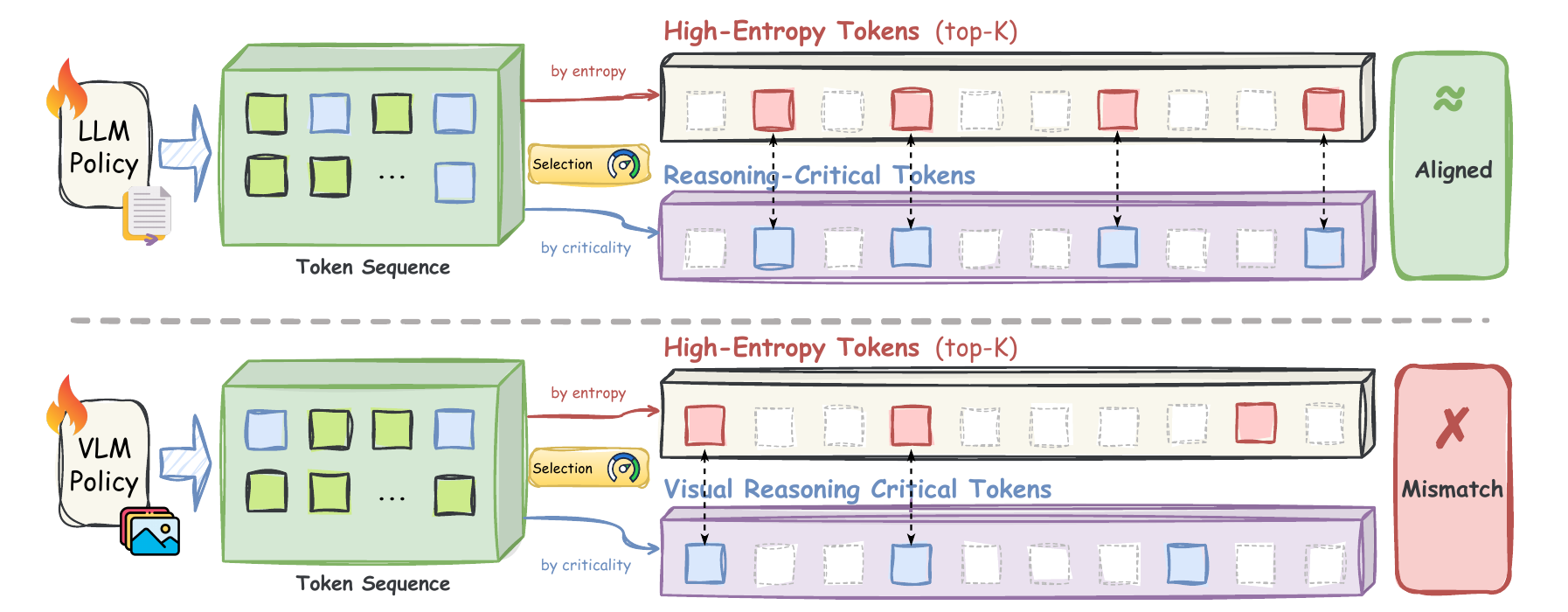}}
        \caption{Entropy selection succeeds in LLMs but collapses in VLMs, omitting vision-critical tokens with low entropy.}
        \label{fig:phenomenon figure}    
    \end{subfigure}
    \begin{subfigure}{\linewidth}
        \centering
        \makebox[\linewidth]{%
            \includegraphics[width=1.05\linewidth]{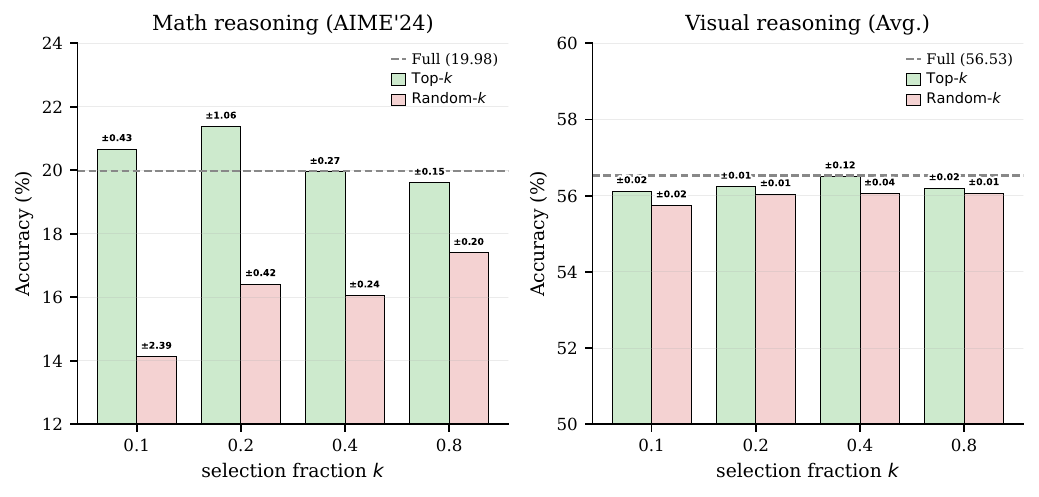}}
        \caption{Performance of different token-selection mechanisms on math/visual reasoning tasks.}
        \label{fig:llm/vlm-performance-gap}    
    \end{subfigure}
    \caption{The entropy-based token selection mechanism collapses on visual reasoning: A small subset of high-entropy token training outperforms \uline{Full} and substantially exceeds Random-$k$ on math reasoning. But on visual reasoning, entropy-based selection only marginally beats Random-$k$ and falls below \uline{Full}.}
  \label{fig:entropy-selection}
  \vspace{-1.2em}
\end{figure}

\vspace{-0.5em}
\section{Introduction}
 Reinforcement learning with verifiable rewards (RLVR) \citep{lambert2024tulu,shao2024deepseekmathpushinglimitsmathematical,yu2026dapo} has rapidly emerged as a pivotal paradigm for incentivizing complex reasoning abilities in Large Language Models (LLMs). A fundamental challenge is \emph{how to effectively allocate credit across individual tokens within long reasoning trajectories} \citep{zhong2024dpo,wang2026beyond}. Recent work in text-only RL converges on a compelling empirical regularity: a small fraction of high-entropy tokens, dubbed \emph{forking tokens}, dominate exploration and supply the majority of positive gradient signals \citep{he2026rethinking,wang2025stabilizing}. Token-level entropy $H_t$ thereby serves as a remarkably reliable proxy for credit assignment and policy improvement \citep{cui2025entropy,xi2026bapo,wang2025harnessing}.

While this entropy-driven mechanism proves highly effective for text-only reasoning tasks, whether it generalizes to visual reasoning remains a critical open question \citep{li2025roleentropyvisualgrounding}. We investigate this through a controlled study (\Cref{sec2:entropy mechanism collapses}) and observe a striking divergence: the entropy mechanism collapses once the visual modality is introduced (Figure~\ref{fig:llm/vlm-performance-gap}). On math reasoning (AIME'24), we reproduce the prior findings \citep{wang2026beyond}, yet the identical mechanism underperforms the full-token GRPO baseline on visual reasoning tasks, even yielding marginal gains over the random selection strategy.
We further probe the underlying cause and identify a previously uncharacterized phenomenon (\Cref{sec2:unpacking the collapse}): the pure entropy-based token selection inadequately captures genuinely vision-sensitive tokens (\Cref{fig:entropy-collapse-statistic}) that possess naturally low entropy \citep{favero2024multi,leng2024mitigating}. Quantitatively, at $k{=}0.2$, top-entropy selection recovers merely $60\%$ of tokens flagged by the vision-sensitive metrics, thereby we argue that token entropy alone remains insufficient for identifying the critical tokens in visual reasoning. Qualitative cases are deferred to~\Cref{fig:entropy-collapse-statistic}(c)(d) and~\Cref{sec:appendix:Qualitative Cases}. 

Unlike text-only reasoning tasks, visual reasoning intrinsically requires interleaving precise perceptual grounding with semantic reasoning \citep{DBLP:journals/corr/abs-2510-27492, DBLP:journals/corr/abs-2506-23918}. While recent work in multimodal RL \citep{liu2025noisyrolloutreinforcingvisualreasoning,PAPO,li2026rethinking} has increasingly recognized the importance of visual perception, they either lack systematic visual signal measurement (e.g., similarity, KL Divergence) \citep{li2026rethinking}, or neglect the fact that entropy \citep{SpotlightonToken, DBLP:journals/corr/abs-2604-01840} primary serve as a proxy of semantic exploration, which visual modalities cannot effectively resolve.


Motivated by above analysis, we propose \textbf{VEPO} (Vision-Entropy token-selection for Policy Optimization), an effective RL framework that explicitly couples \emph{vision-sensitivity} with \emph{token entropy} during token selection, redirecting gradient credit toward tokens that serve a true functional purpose in visual reasoning. 
Concretely, VEPO performs a counterfactual forward pass with a noise-perturbed image and quantifies token-level visual dependency by multiplicatively coupling two complementary signals: (i) the Jensen--Shannon divergence (JSD) \citep{menendez1997jensen} for distributional shift, and (ii) the absolute entropy gap $|\Delta H_t|$ for direction-agnostic shifts.
This joint formulation is conceptually inspired by  the \emph{aleatoric--epistemic} uncertainty decomposition \citep{houlsby2011bayesian,depeweg2018decomposition,kendall2017uncertainties}, establishing distributional divergence and predictive confidence as complementary axes.
Furthermore, the multiplicative coupling acts as a soft probabilistic OR \citep{xu2026tip, pearl2014probabilistic}, guaranteeing the comprehensive capture of visually sensitive signals by preventing the omission of tokens that spike along either axis (\Cref{appendix:theory}).
Ultimately, by modulating this fused signal with token entropy, VEPO ensures policy updates are driven exclusively by \emph{visual forking tokens} that are simultaneously visually grounded and highly informative.




We evaluate VEPO with Qwen2.5-VL-7B/3B-Instruct \citep{DBLP:journals/corr/abs-2502-13923}. Experimental results demonstrate that VEPO achieves 
leading performance at the 7B scale, outperforming the top-entropy baseline by $+2.28$, with consistent gains extending to the 3B scale. Moreover, we conduct comprehensive ablation studies and analysis (\Cref{sec5:Ablation Study and Analysis}) to validate the soundness and effectiveness of our proposed method.

\begin{figure*}[t!]
    \includegraphics[width=1.01\linewidth]{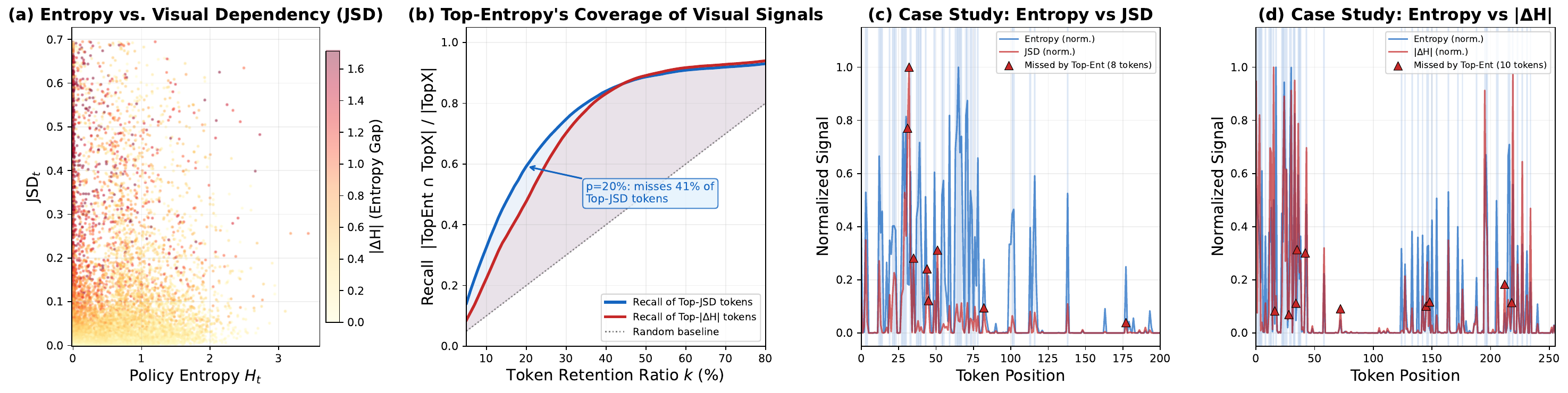}
    \centering
 	\caption{(a) Many high-JSD / high-$|\Delta H|$ tokens lie in the low-entropy region. (b) Top-Entropy selection misses $41\%$ of Top-JSD tokens at $k\!=\!20\%$, alongside a comparable proportion of Top-$|\Delta H|$ tokens. (c,d) Tokens flagged by JSD / $|\Delta H|$ (red triangles) frequently fall in entropy valleys and are missed by Top-Entropy selection.}\label{fig:entropy-collapse-statistic}
  \vspace{-0.5cm}
\end{figure*}

Our contributions are summarized as follows:
\begin{itemize}
    \item We provide a comprehensive diagnosis revealing why the top-entropy token selection mechanism collapses in visual reasoning, attributing to its inherent omission of visually grounded tokens.
    \vspace{-0.3em}
    \item We propose \textbf{VEPO}, a novel RL framework that integrates visual sensitivity and token entropy signals to achieve a visually grounded and highly informative optimization.
    \vspace{-0.3em}
    \item Extensive experiments across seven benchmarks and two model scales, alongside comprehensive ablation studies, demonstrate the effectiveness and soundness of VEPO.
\end{itemize}

\section{Preliminary Experiments}\label{sec2:Preliminary Experiments}
Recent works on math reasoning have uncovered a striking empirical observation:
training on a small subset of high-entropy tokens yields non-trivial benefits \citep{wang2026beyond,he2026rethinking, wang2025stabilizing}. Building upon this, we reproduce the experiments and further investigate whether this mechanism generalizes to visual reasoning tasks. Our empirical findings and analyses are elaborated in the subsequent sections, while detailed experimental setup is deferred to~\Cref{appendix:Preliminary Experiments Setup}.

\subsection{The entropy mechanism collapses on visual reasoning}
\label{sec2:entropy mechanism collapses}
The entropy-based token-selection mechanism collapses when transferred from text to visual reasoning (\Cref{fig:entropy-selection}). On math reasoning (AIME'24), selecting a small fraction of high-entropy tokens (Top-10\%/20\%) exceeds the \uline{Full}-token baseline, while substantially outperforming Random-$k$ selection across all fractions. This empirically verifies the conjecture presented in \citep{wang2026beyond}. In contrast, on visual reasoning, we find: 1) Top-$k$ entropy selection consistently falls below the \uline{Full}-token baseline and yields only marginal gains over Random-$k$ selection. 2) Both the absolute performance and the Top/Random selection gaps remain consistent regardless of the fractions $k$. These results indicate that token entropy fails to precisely identify critical reasoning decision points once visual evidence is introduced. Detailed results are deferred to~\Cref{appendix:Preliminary Details}.

\subsection{Unpacking the collapse: The deficiency of visual signals}\label{sec2:unpacking the collapse}
Drawing inspiration from recent studies that highlight visual evidence during reasoning \citep{LookAgain, SpotlightonToken}, we hypothesize that high-entropy tokens are sometimes misaligned with critical, visually-grounded decision points. 
To probe this, we measure two token-level visual dependence signals: the Jensen--Shannon divergence (JSD) \citep{menendez1997jensen} and the entropy gap $|\Delta H_t|$ (Detailed definitions are provided in~\Cref{sec3:token-level visual sentivity})  between the model's predictive distribution with and without the image noise.

We get several intriguing observations (illustrated in \Cref{fig:entropy-collapse-statistic}): First, a substantial fraction of high-JSD and high-$|\Delta H|$ tokens concentrates in the low-entropy region (\Cref{fig:entropy-collapse-statistic}(a)). Mechanistically, visual evidence rules out distractor continuations and sharpens the posterior, so visually-grounded tokens sometimes become more confident, manifesting as lowered entropy \citep{favero2024multi, leng2024mitigating}. Meanwhile, high-entropy tokens are driven more by linguistic indeterminacy \citep{bigelow2025forking, park2026reasoning}. Second, \Cref{fig:entropy-collapse-statistic}(b) quantifies the representational gap: at retention ratio $k=0.2$, Top-Entropy selection recovers a mere $59\%$ of Top-JSD tokens, and similarly, a comparable proportion of Top-$|\Delta H|$ tokens. As $k$ grows, this recall scales in lockstep with the accuracy curve shown in \Cref{fig:entropy-selection}(b), gradually approaching the full-token baseline. Performance peaks at $k=0.4$, where coverage is sufficient and noise remains bounded. At $k=0.8$, the marginal gain vanishes as the selection is diluted with linguistically-uncertain but visually-irrelevant tokens. \Cref{fig:entropy-collapse-statistic}(c)(d) are two qualitative cases demonstrate how purely entropy-based selection inadequately captures vision-sensitive tokens.

\begin{figure*}[t!]
    \includegraphics[width=1.0\linewidth]{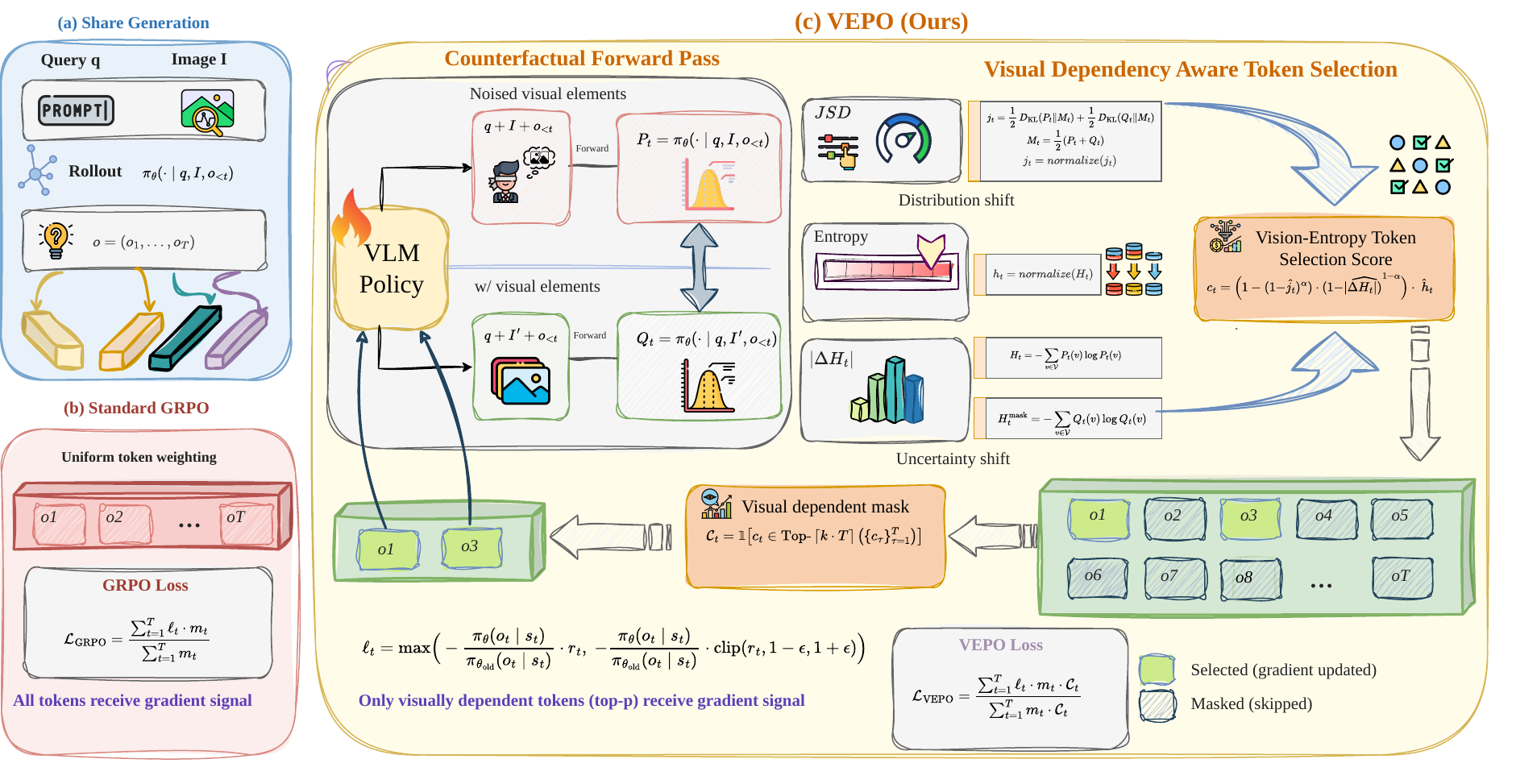}
    \centering
    \vspace{-2em}
 	\caption{The main training pipeline. VEPO performs a counterfactual forward pass with a noise perturbed image and quantifies token-level visual dependency by multiplicatively coupling two complementary signals: \textbf{JSD} and $\bm{|\Delta H_t|}$, which are then modulated with token \textbf{entropy} for simultaneously visually grounded and informative optimization. The detailed algorithm is in \Cref{appendix:Algorithm}. }\label{fig:main}
  \vspace{-0.5cm}
\end{figure*}

\section{Method: VEPO}
\paragraph{Motivation.}
From our findings and analysis in \Cref{sec2:Preliminary Experiments}, we argue that token entropy alone is insufficient to identify critical \textit{forking tokens} in visual reasoning. Following recent works that emphasize visual evidence in the reasoning process \citep{liu2025noisyrolloutreinforcingvisualreasoning, Look-Back, DoMLLMsReallySeeIt,ReinforcedAttentionLearning}, we aim to highlight the role of visual signals in evaluating token uncertainty. Hence, we propose \textbf{VEPO}, an effective RL training algorithm driven by a joint visual sensitivity and entropy token-selection mechanism. We elaborate on the metrics used to measure token visual sensitivity in \Cref{sec3:token-level visual sentivity}, followed by our selection mechanism and training method in \Cref{sec3:Joint Selection} and \Cref{sec3:training methods}, all of which are illustrated in \Cref{fig:main}.


\subsection{Token-Level Visual Sensitivity}\label{sec3:token-level visual sentivity}
Given a vision-language policy model parameterized by $\pi_\theta$, as shown in~\Cref{fig:main}, we perform two forward passes for each generated response $o = (o_1, \ldots, o_T)$,  
yielding a pair of  symmetric full-vocabulary next-token distributions at each position $t$:
 \begin{equation}
\begin{aligned}
    P_t(\cdot) &= \pi_\theta(\cdot \mid q, I, o_{<t}), \\
    Q_t(\cdot) &= \pi_\theta(\cdot \mid q, I', o_{<t}),
\end{aligned}
\label{eq:forward}
\end{equation}
where $P_t$ and $Q_t$ denote the token distributions conditioned on the original image $I$ and the noisy image $I'$, respectively, with the same response prefix $o_{<t}$. Their discrepancy therefore isolates the effect of visual corruption. To quantify this distributional change, we use the Jensen--Shannon divergence (JSD) \citep{menendez1997jensen}:
\begin{equation}
\mathrm{JSD}_t = \frac{1}{2}\,D_{\mathrm{KL}}(P_t \| M_t)
               + \frac{1}{2}\,D_{\mathrm{KL}}(Q_t \| M_t),
\label{eq:jsd}
\end{equation}
where
\begin{equation}
M_t = \frac{1}{2}(P_t + Q_t).
\end{equation}
Detailed comparisons and analysis of JSD and KL-based variants are provided in ~\cref{Analysis:visual signal variants}.

We further compute the entropy gap of the two distributions to characterize the model's token predictive uncertainty: 
\begin{align}
      H_t         &= -\sum_{v \in \mathcal{V}} P_t(v) \log P_t(v),
      \label{eq:entropy_normal} \\
      H_t^{\mathrm{mask}} &= -\sum_{v \in \mathcal{V}} Q_t(v) \log Q_t(v),
      \label{eq:entropy_cf}
  \end{align}
where $H_t$ and $H_t^{\mathrm{mask}}$ denote the entropy under the original image and the corrupted image. We define the entropy gap as
\begin{equation}
\Delta H_t = H_t^{\mathrm{mask}} - H_t,
\label{eq:entropy_gap}
\end{equation}
whose magnitude $|\Delta H_t|$ measures how strongly visual corruption changes predictive uncertainty at position $t$. We therefore use $|\Delta H_t|$ as a direction-agnostic signal of visual sensitivity. Further analysis is provided in ~\cref{Analysis:visual signal variants}.

\subsection{Joint Vision-Entropy for Token Selection}
\label{sec3:Joint Selection}
We define a token-level vision-entropy score to select both visually dependent and highly informative tokens. Following prior token-importance formulations, we combine two visual signals with a probabilistic OR \citep{xu2026tip, pearl2014probabilistic}.

\begin{definition}[\textbf{Vision-Entropy Token Selection Score}]
Let $\hat{j}_t \in [0,1]$, $|\widehat{\Delta H}_t| \in [0,1]$, and $\hat{h}_t \in [0,1]$ denote the per-response min-max normalized JSD, absolute entropy gap, and entropy at position $t$. We define the score of token $t$ as
\begin{equation}
c_t :=
\underbrace{\Big(1 - (1-\hat{j}_t)^{\alpha}(1-|\widehat{\Delta H}_t|)^{1-\alpha}\Big)}_{g_t}
\cdot \hat{h}_t,
\label{eq:score_d}
\end{equation}
where $\alpha \in [0,1]$ controls the trade-off between JSD and $|\Delta H_t|$.
\end{definition}

Here $g_t$ is a joint visual-dependency score that combines two complementary signals, $\hat{j}_t$ and $|\widehat{\Delta H}_t|$. By~\cref{eq:score_d}, it increases with either signal and peaks when both are strong. The entropy term $\hat{h}_t$ then acts as a modulation factor, constrained by the visual signal, ensuring that we prioritize tokens that are both visually sensitive and high informative. We provide an information-theoretic interpretation of this formulation in Appendix~\ref{appendix:theory}. Detailed ablations are provided in~\Cref{sec5:ablation:Visual-Entropy Token Selection Score}. 

As shown in Figure~\ref{fig:main}, we then select the top-$k$ fraction of tokens within each response. For a response of length $T$, we define the binary mask:
\begin{equation}
\mathcal{C}_t = 1\!\left[c_t \in \mathrm{Top}\text{-}\left\lceil kT \right\rceil \left(\{c_\tau\}_{\tau=1}^{T}\right)\right],
\label{eq:mask}
\end{equation}
where only tokens with $\mathcal{C}_t=1$ are used for the policy gradient update.

\subsection{Reinforcement Enhanced Visual Reasoning}
\label{sec3:training methods}

Standard GRPO \citep{shao2024deepseekmathpushinglimitsmathematical} computes the policy gradient loss over all valid response tokens. We modify it by restricting the update to the selected subset indicated by $\mathcal{C}_t$:
\begin{equation}
\mathcal{L}_{\mathrm{VEPO}} =
\frac{\sum_{t=1}^{T} \ell_t \cdot m_t \cdot \mathcal{C}_t}
{\sum_{t=1}^{T} m_t \cdot \mathcal{C}_t},
\label{eq:vipo_loss}
\end{equation}
where $m_t$ is the standard response mask. The per-token clipped policy gradient loss is:
\begin{equation}
\begin{aligned}
\ell_t &= \max\!\big({-}A \cdot r_t,\; {-}A \cdot \hat{r}_t\big), \\
\hat{r}_t &= \mathrm{clip}(r_t,\, 1{-}\epsilon,\, 1{+}\epsilon),
\end{aligned}
\label{eq:per_token_loss}
\end{equation}
with importance ratio:
\begin{equation}
r_t = \frac{\pi_\theta(o_t \mid s_t)}{\pi_{\theta_{\mathrm{old}}}(o_t \mid s_t)}.
\label{eq:importance_ratio}
\end{equation}
By replacing $m_t$ with $m_t \cdot \mathcal{C}_t$, VEPO concentrates gradient updates on \emph{visual forking} tokens.

\begin{table*}[t!]
\centering
\setlength{\tabcolsep}{5.5pt}
\renewcommand{\arraystretch}{1.2}
\resizebox{\linewidth}{!}{
\begin{tabular}{l|cccccccc}
\Xhline{ 1.2pt}
\rowcolor{CadetBlue!20}
\textbf{Model / Dataset} & \textbf{Geo3K} & \textbf{MMK12} & \textbf{HalluBench} & \textbf{MathVista} & \textbf{We-Math} & \textbf{MathVerse} & \textbf{MathVision} & \textbf{Avg.} \\
\Xhline{1.2pt}

\multicolumn{9}{c}{\textit{Closed-source Models}} \\
\hline
\rowcolor{gray!10}
GPT-4o \citep{openai2024gpt4ocard}              & -- & 49.90 & 71.40 & 63.80 & 69.00 & 50.80 & 30.40 & -- \\ 
\rowcolor{gray!10}
o1 \citep{openai2026openaio1card}               & -  - & --   & --   & 73.90 & 98.70 & 57.00 & 60.30 & -- \\
\rowcolor{gray!10}
Claude-3.7-Sonnet \citep{claude37}              & -- & 55.30 & --   & 66.80 & 72.60 & 52.00 & 41.30 & -- \\
\rowcolor{gray!10}
Gemini-2.0-Flash \citep{geminiteam2025gemini25} & -- & 65.20 & --   & 70.40 & 71.40 & 59.30 & 41.30 & -- \\
\hline

\multicolumn{9}{c}{\textit{Open-source Models}} \\
\hline
\rowcolor{gray!10}
Qwen2.5-VL-32B \citep{DBLP:journals/corr/abs-2502-13923}  & -- & 61.50 & -- & 71.40 & 65.40 & 48.20 & 35.90 & -- \\
\rowcolor{gray!10}
InternVL2.5-38B \citep{chen2025internvl}               & -- & 66.80 & -- & 74.70 & 69.10 & 49.90 & 40.10 & -- \\
\rowcolor{gray!10}
Qwen2.5-VL-72B \citep{DBLP:journals/corr/abs-2502-13923}  & -- & 58.00 & -- & 71.90 & 67.50 & 49.40 & 31.80 & -- \\
\rowcolor{gray!10}
QvQ-72B-Preview \citep{qvq-72b-preview}                & -- & 70.50 & -- & 74.80 & 72.40 & 57.60 & 38.10 & -- \\
\rowcolor{gray!10}
InternVL2.5-78B \citep{chen2025internvl}               & -- & 61.60 & -- & 72.30 & 66.30 & 51.70 & 32.20 & -- \\
\hline

\multicolumn{9}{c}{\textit{Visual-focused RL Fine-tuning Methods (7B)}} \\
\hline
\rowcolor{gray!10}
NoisyRollout-7B \citep{liu2025noisyrolloutreinforcingvisualreasoning} & 48.42 & 66.05 & 71.29 & 71.70 & 69.20 & \textbf{49.16} & \uline{28.91} & 57.82 \\
\rowcolor{gray!10}
PAPO-DAPO-7B \citep{PAPO}                                                 & 50.08 & 66.05 & \textbf{72.24} & \textbf{73.10} & 68.69 & 48.07 & 26.23 &  57.78 \\
\rowcolor{gray!10}
VPPO-7B \citep{SpotlightonToken}                                       & \uline{50.42} & \uline{69.02} & 70.14 & \uline{72.10} & 69.43 & 48.21 & 28.19 & \uline{58.22} \\
\rowcolor{gray!10}
R1-ShareVL \citep{yao2026rsharevl}                                     & 46.59 & 65.68 & 69.93 & 68.10 & \textbf{70.17} & 47.39 & 27.27 & 56.45 \\
\hline

\multicolumn{9}{c}{\textit{Our Method}} \\
\hline
\rowcolor{gray!10}
Qwen2.5-VL-7B-Instruct \citep{DBLP:journals/corr/abs-2502-13923} & 35.94 & 53.66 & 63.51 & 66.87 & 62.13 & 43.27 & 25.10 & 50.07 \\
\rowcolor{gray!10}

\hspace{2em}+ GRPO                                              & 49.42 & 65.18 & 70.45 & 71.60 & 67.18 & 48.53 & 27.40 & 57.11 \\
\rowcolor{gray!10}
\hspace{2em}+ GRPO (Top 20\% high entropy)
& 47.59 & 64.93 & 68.56 & 70.40 & 67.13 & 48.25 & \textbf{28.95} & 56.54 \\
\rowcolor{gray!10}
\hspace{2em}+ GRPO (Top 40\% high entropy)
& 46.76 & 63.44 & 69.09 & 71.40 & 68.45 & 48.86 & 27.53 & 56.50 \\
\hline
\rowcolor{gray!10}
\hspace{2em}+ \textbf{VEPO}
& \textbf{51.58} & \textbf{69.64} & \uline{71.71} & 72.00 & \uline{69.54} & \uline{48.93} & 28.31 & \textbf{58.82} \\[-1.9pt]
\rowcolor{gray!10}
\hspace{2em}${\triangle}$  vs. GRPO(Top 20\% high entropy)
 & {\textcolor{green(pigment)}{(+3.99)}}
 & {\textcolor{green(pigment)}{(+4.71)}}
 & {\textcolor{green(pigment)}{(+3.15)}}
 & {\textcolor{green(pigment)}{(+1.60)}}
 & {\textcolor{green(pigment)}{(+2.41)}}
 & {\textcolor{green(pigment)}{(+0.68)}}
 & {\textcolor{regRed}{($-$0.64)}}
 & {\textcolor{green(pigment)}{(+2.28)}} \\
\Xhline{1.2pt}
\end{tabular}
}
\vspace{-0.3em}
\caption{The main experimental results across multiple visual math and hallucination benchmarks. Models are trained on the Geo3K and MMK12 datasets, which serve as in-domain evaluations, while the remaining datasets assess out-of-domain generalization. Our proposed VEPO (bottom) demonstrates superior overall performance compared to the corresponding baselines. Best results are highlighted in \textbf{bold}, and the second-best are \uline{underlined}.}
\label{tab:performance_simplified_7B}
\vspace{-1.1em}
\end{table*}

\section{Experiment}
\subsection{Experiment Setup} 
\paragraph{Dataset and Evaluation.}
We construct the training set from subsets of Geometry3K \citep{geo3k} and MMK12 \citep{meng2025mm}, resulting in approximately 4.2K training samples. We evaluate the model on both in-domain and out-of-domain benchmarks: the Geometry3K and MMK12 validation splits serve as in-domain evaluation, while a set of public visual reasoning benchmarks is used to assess out-of-domain generalization. Detailed benchmark descriptions are provided in Appendix~\ref{appendix:implementation details}.

\paragraph{Implementation.}We use the EasyR1 framework\cite{zheng2025easyr1} as the reinforcement learning framework for vision-language model training. The policy models are initialized from Qwen2.5-VL-3/7B-Instruct\footnote{\label{fn:qwen}\scriptsize\url{https://huggingface.co/collections/Qwen/qwen25-vl}}, followed by fine-tuning on our custom dataset. More implementation details can be found in Appendix~\ref{appendix:implementation details}.  
\paragraph{Baseline.}
We compare our method with several baselines: closed-source multimodal models, open-source large-scale VLMs, visual-focused RL fine-tuning methods, and top-entropy token selection methods. For RL-based and entropy-based selection baselines, we retrain these methods under the same 4.2K mixed training set for fair comparison. More details can be found in Appendix~\ref{appendix:Baseline}.

\subsection{Main Results}
The main results are presented in \Cref{tab:performance_simplified_7B}.
We primarily analyze the model performance compared with top-entropy selection methods and effective visual-focused RL fine-tuning methods. We also included the scaling experiments on the 3B model.

\paragraph{Comparison with Top-Entropy selection.}
We first compare VEPO against entropy-based methods. As shown at the bottom of~\Cref{tab:performance_simplified_7B}, both entropy-only variants (top-20\% / top-40\%) underperform the full-token GRPO baseline on average (56.54 / 56.50 vs.\ 57.11), corroborating our preliminary observation in~\Cref{sec2:Preliminary Experiments} that token entropy alone is an insufficient selection signal for visual reasoning tasks. By further incorporating the visual-sensitivity criterion, VEPO reaches a \textbf{+2.28} gain over the strongest entropy-only variant and outperforms full-token GRPO by \textbf{+1.71}. The benefits are more notable on in-domain benchmarks (Geo3K \textbf{+3.99}, MMK12 \textbf{+4.71}) and transfer consistently to all out-of-domain tasks except for a marginal -0.64 drop on MathVision. These results suggest that visual sensitivity serves as an effective complementary signal to token entropy, helping to identify the effective decision points in visual reasoning.

\paragraph{Comparison with Visual-focused RL methods.}
We further compare VEPO with representative visual-focused RL fine-tuning methods. As shown in the middle of~\Cref{tab:performance_simplified_7B}, VEPO attains the highest average of \textbf{58.82}, consistently surpassing all baselines. Among these, VPPO \citep{SpotlightonToken} is the most competitive one, leveraging the KL divergence between clear and perturbed images for token selection. While we acknowledge the crucial role of visual signals in visual reasoning, the reasoning trajectory itself remains predominantly textual. Broad exploration \citep{cheng2026reasoning} along this textual reasoning chain is essential for pushing beyond performance ceiling, token entropy therefore remains an indispensable signal. This motivation lies at the heart of VEPO, which couples visual sensitivity with token entropy through a multiplicative scoring mechanism. A more comprehensive analysis is deferred to~\Cref{sec5:ablation:Visual-Entropy Token Selection Score}. 

\paragraph{Results across Different Model Scales.}
To verify VEPO's scalability, we evaluate it on Qwen2.5-VL-3B-Instruct (\Cref{tab:performance_simplified_3B}), yielding consistent conclusions. Details are deferred to~\Cref{appendix:Results on Qwen2.5-VL-3B-Instruct}.
Furthermore, VEPO-7B matches or exceeds several open-source multimodal models that are 4--10$\times$ larger. On MMK12, VEPO (\textbf{69.6}) significantly outperforms 38B--78B variants of Qwen2.5-VL and InternVL2.5. Similarly, on MathVista, it overtakes Qwen2.5-VL-72B and remains highly competitive with InternVL2.5-78B. These results further corroborate the effectiveness of our VEPO.

\begin{table*}[!t]
\centering
\tiny
\setlength{\tabcolsep}{6pt}
\renewcommand{\arraystretch}{1.2}
\resizebox{\linewidth}{!}{
\begin{tabular}{l|cccccccc}
\Xhline{1.2pt}
\rowcolor{CadetBlue!20}
\textbf{Variant} & \textbf{Geo3K} & \textbf{MMK12} & \textbf{HalluBench} & \textbf{MathVista} & \textbf{We-Math} & \textbf{MathVerse} & \textbf{MathVision} & \textbf{Avg.} \\
\Xhline{1.2pt}

\rowcolor{gray!10}
\textbf{Full VEPO} & 51.58 & 69.64 & 71.71 & 72.00 & 69.54 & 48.93 & 28.31 & 58.82 \\
\hline

\multicolumn{9}{c}{\textit{Ablation on VEPO token selection signals}} \\
\hline
\rowcolor{gray!10}
\hspace{2em}w/o JSD & 46.26 & 66.67 & 69.40 & 68.70 & 66.84 & 47.82 & 27.30 & 56.14 \\
\rowcolor{gray!10}
\hspace{2em}w/o Entropy Gap & 46.42 & 65.43 & 68.35 & 70.80 & 66.67 & 48.32 & 26.78 & 56.11 \\
\rowcolor{gray!10}
\hspace{2em}w/o Entropy & 47.75 & 69.02 & 70.35 & 72.30 & 69.48 & 48.98 & 27.34 & 57.89 \\
\hline

\multicolumn{9}{c}{\textit{Ablation and Analysis on variants of VEPO token selection mechanism}} \\
\hline
\rowcolor{gray!10}
\hspace{2em}Replacing JSD with KL & 50.42 & 69.27 & 69.30 & 71.60 & 68.22 & 49.87 & 27.73 & 58.06 \\
\rowcolor{gray!10}
\hspace{2em}Bottom 80\% & 41.60 & 60.97 & 68.45 & 66.40 & 67.53 & 43.89 & 26.38 & 53.60 \\
\rowcolor{gray!10}
\hspace{2em}$\max(\Delta H_t, 0)$ & 50.42 & 67.78 & 69.71 & 71.40 & 68.67 & 49.74 & 28.42 & 58.02 \\
\rowcolor{gray!10}
\hspace{2em}Additive Fusion & 49.75 & 70.01 & 68.66 & 71.60 & 67.99 & 50.23 & 28.68 & 58.13 \\
\Xhline{1.2pt}
\end{tabular}
}
\vspace{-0.3em}
\caption{Ablation and analysis of VEPO token selection variants. \textbf{Full VEPO} denotes the complete joint selection. The upper block ablates each signal individually (JSD, entropy gap, or entropy), while the lower block examines alternative formulations of the VEPO's selection mechanism. \textbf{Additive Fusion} replaces the multiplicative aggregation of two visual signals with linear interpolation: $g_t = \alpha\hat{j}_t + (1\!-\!\alpha)|\widehat{\Delta H_t}|$.}
\label{tab:ablation_components}
\vspace{-1em}
\end{table*}

\section{Ablation Study and Analysis}\label{sec5:Ablation Study and Analysis}
\subsection{Ablation Study} \label{sec5:ablation:Visual-Entropy Token Selection Score}
\paragraph{Signals of VEPO Token Selection Mechanism.} 
We first ablate the three signals in our joint selection mechanism (\Cref{eq:score_d}). As shown in~\Cref{tab:ablation_components}, removing any signal degrades average performance, confirming their complementary roles. The two visual signals (JSD and $|\Delta H_t|$) contribute the most, with their removal causing nearly identical drops of $-2.68$ and $-2.71$, respectively. Although omitting the entropy signal $\hat{h}_t$ yields a milder average decline ($-0.93$), we find it severely impacts benchmarks dominated by long textual reasoning chains (Geo3K, MathVision) while remaining neutral on perception-heavy ones (MathVerse, MathVista). Empirically validating that entropy and visual sensitivity are \emph{complementary signals} rather than competing alternatives.


\paragraph{Visual Sensitivity $\alpha$ and Token Fraction $k$.}
\label{sec5.1:ablation on hyperpara}
We further investigate the effects of two key hyperparameters:
the visual sensitivity balancing coefficient $\alpha$ and the token retention ratio $k$. 
As illustrated in ~\Cref{fig:ablation_alpha_k}, we summarize several 
key findings: 
1) The optimal configuration is $(k{=}0.2, \alpha{=}0.7)$, yielding the highest average score. 
2) The dense retention ratio $k{=}0.4$ uniformly underperforms the sparser settings across all $\alpha$, confirming that retaining too many tokens introduces weakly grounded positions into the policy update, consistent with \citep{wang2026beyond}. 
3) Performance peaks at $\alpha{=}0.7$ and degrades sharply at either extreme ($\alpha{=}0.1$ or $\alpha{=}0.9$), empirically confirming that JSD and $|\Delta H_t|$ serve as \emph{complementary visual signals}, with JSD playing a more prominent role.
4) Interestingly, the optimal $k$ adapts to the strength of $\alpha$. When $\alpha$ is small to moderate ($0.1\sim0.5$), $k{=}0.3$ outperforms $k{=}0.2$, as a weaker JSD signal benefits from a broader token pool.
Conversely, at $\alpha{\geq}0.7$, $k{=}0.2$ becomes optimal as the stronger JSD signal enables reliable selection from a sparser set.
Detailed results on each benchmark are deferred to~\Cref{appendix:Ablation a and k}.

\begin{figure}[t!]
    \centering
    \includegraphics[width=0.95\linewidth]{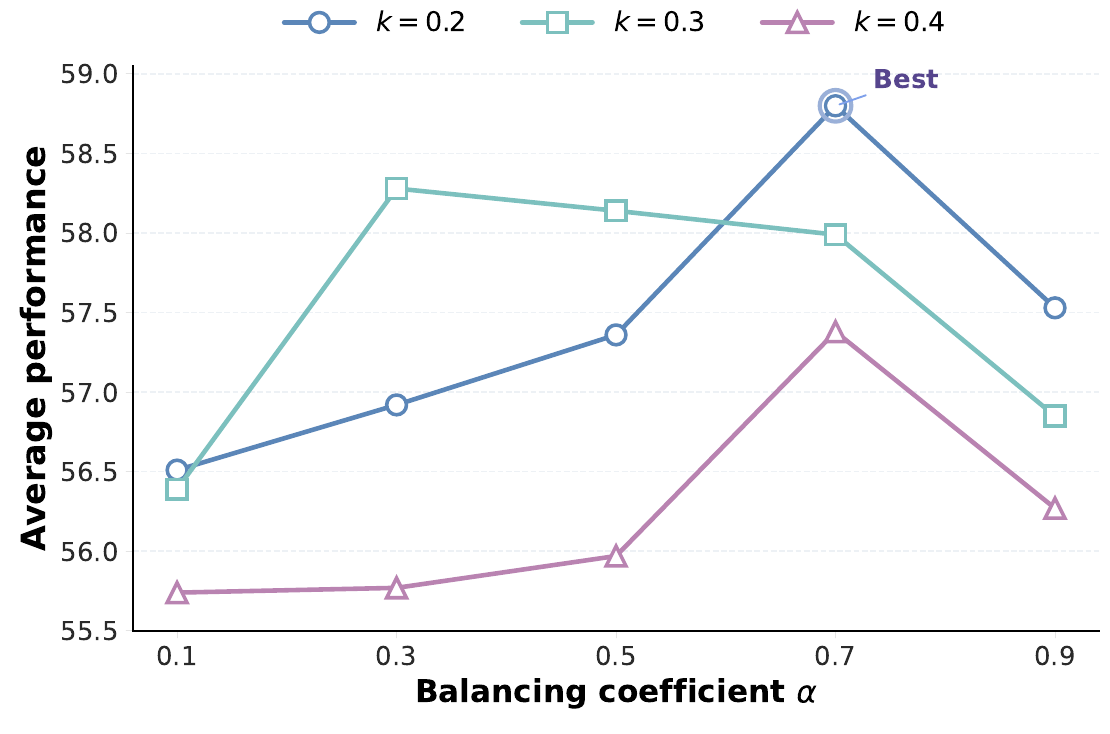}
    \vspace{-0.5em}
    \caption{Ablations on the balancing coefficient $\alpha$ and the token selection ratio $k$.}
    \label{fig:ablation_alpha_k}
    \vspace{-1em}
\end{figure}

\begin{figure*}[t!]
    \includegraphics[width=1.01\linewidth]{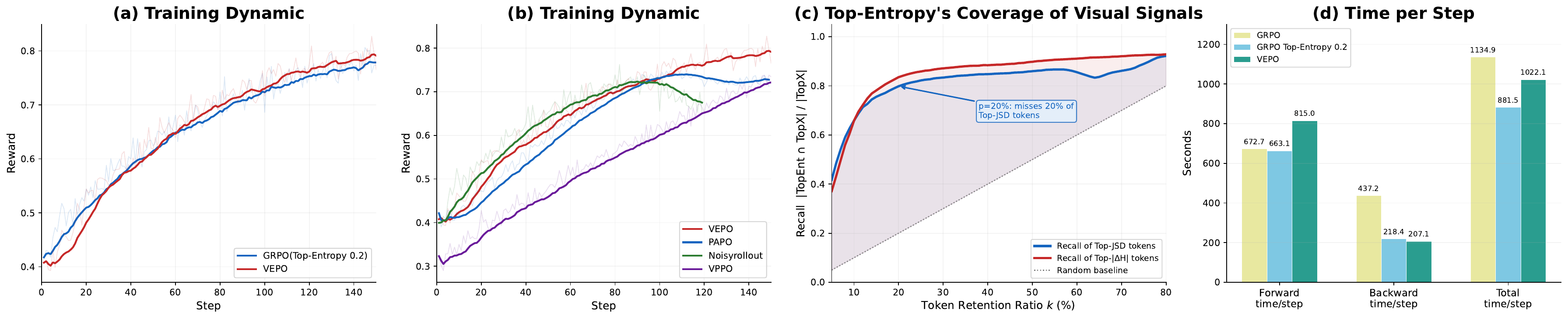}
    \centering
 	\caption{(a)(b) The training dynamics of VEPO and top-entropy and visual-focused RL methods (VPPO \citep{SpotlightonToken} removes format reward, we follow the default setting). (c) At $k{=}20\%$, VEPO covers more vision-sensitivity tokens, reducing miss rate to $20\%$. (d) Per-step wall-clock time of VEPO and GRPO baselines.}\label{fig:figure2}
  \vspace{-0.5cm}
\end{figure*}
\subsection{Analysis}
\paragraph{Comparison with visual signal variants.}
~\label{Analysis:visual signal variants}
We compare our visual signal formulation against three principled alternatives. (1) Replacing JSD with KL, the metric utilized in VPPO \citep{SpotlightonToken}, decreases the average by $0.76$ points. Unlike VPPO's trajectory-level advantage reweighting, VEPO employs token-level selection driven by a multiplicative coupling $v_t = g_t \cdot \hat{h}_t$. This makes unbounded signals like KL highly susceptible to outlier tokens, whereas JSD remains bounded and symmetric \citep{menendez1997jensen,lin1991divergence}. 
(2) Substituting $|\Delta H_t|$ with $\max(\Delta H_t, 0)$ incurs a $0.80$ drop. This confirms that tokens whose entropy decreases upon visual conditioning (where the image successfully disambiguates an uncertain prediction) serve as equally informative cues for visual dependency \citep{nam2025vague}.
(3) Replacing the multiplicative aggregation with additive fusion drops by $0.69$ points. The multiplicative form implements a soft probabilistic OR \citep{pearl2014probabilistic} that saturates when either signal is strong, while additive fusion under-learns non-additive interactions \citep{jayakumar2020multiplicative}. 
These results validate VEPO's bounded, direction-agnostic, and multiplicatively aggregated visual signals as principled selection choices.
More analysis about the visual perturbation is deferred to~\Cref{sec:appendix:Ablations on Visual Perturbation}.


\paragraph{Training dynamic and efficiency.}

\Cref{fig:figure2} (a) and (b) illustrate the training dynamics of VEPO and corresponding baselines. We observe that visual-focused RL is prone to premature convergence and optimization instability, whereas our method maintains stable training dynamics, yields higher rewards, thereby exhibits superior scalability \citep{DBLP:journals/corr/abs-2510-13786}.
\Cref{fig:figure2}(c) explains this gap mechanistically: at $k{=}0.2$, top-entropy selection misses ${\sim}40\%$ of vision-sensitive (Top-JSD / $|\Delta H_t|$) tokens. In contrast, VEPO reduces this to just $20\%$. By incorporating more visually sensitive tokens, VEPO enables the model to strike an optimal balance between perceptual grounding and reasoning informativeness. \Cref{fig:figure2} (d) reports per-step wall-clock time: by restricting gradient updates to top-$k$ tokens, VEPO is 
$\sim 10\%$ faster than Full-token GRPO and $\sim 16\%$ slower than Top-Entropy selection, which is a modest trade-off between computation and performance.

\paragraph{Qualitative Analysis.}
\Cref{fig:case_1} and~\Cref{fig:case_2} in~\Cref{sec:appendix:Qualitative Cases} illustrate the distinct token selection patterns between VEPO and the top-entropy mechanism. Both methods identify canonical reasoning anchors (``\texttt{Initialize}'', ``\texttt{condition}''), confirming a shared foundational baseline. Beyond this base, entropy-only selections drift to LaTeX scaffolding (``\texttt{\textbackslash frac}'', ``\texttt{\textbackslash boxed}'') and discourse fillers (``\texttt{follows}'', ``\texttt{we}''), 
where uncertainty stems from linguistic variability rather than visual dependency. In contrast, VEPO uniquely recovers visually-grounded content tokens, like variables and operators tied to the code (``\texttt{a}'', ``\texttt{+}'') and condition keywords referencing the flowchart (``\texttt{greater}'', ``\texttt{when}''). This demonstrates VEPO's ability to accurately redirect credit toward tokens that genuinely ground the textual reasoning in the visual input.

\section{Related Work}
\paragraph{Entropy Mechanism and Token Selection.}
In text-only reinforcement learning (RL), entropy is a well-formalized proxy for policy performance \citep{cui2025entropy}, where a small minority of high-entropy tokens dominates exploration and provides the primary gradient signals \citep{wang2026beyond,yu2026dapo,xi2026bapo}. Whether this principle transfers to the multimodal setting, however, remains underexplored \citep{li2025roleentropyvisualgrounding}: in visual reasoning, token uncertainty arises from heterogeneous sources spanning reasoning \citep{chung2025mllms}, visual perception and image noise \citep{fang2026enhancing}. Existing multimodal RL frameworks either conflate general high entropy with genuine visual dependency \citep{DoMLLMsReallySeeIt,chen2026ares}, or, as in VPPO \citep{SpotlightonToken} and PGPO \citep{DBLP:journals/corr/abs-2604-01840}, quantify per-token visual sensitivity via KL divergence between with and without-image distributions while treating visual signals in isolation. We bridge this gap with VEPO, which couples the vision-sensitivity with entropy to identify pivotal token-level decision points in visual reasoning.
\paragraph{Visual Perception Enhancement.}
Unlike the strong reasoning shown in text-only domains such as math and code \citep{deepseekr1,openai2026openaio1card,DBLP:journals/corr/abs-2512-17260}, MLLMs still lag on complex visual reasoning, a gap recently attributed to inadequate visual perception and grounding rather than reasoning itself \citep{SeeingbutNotBelieving,DoMLLMsReallySeeIt}. Current RL approaches fall into three lines: \emph{(i)} shaping visual attention via auxiliary rewards \citep{LookAgain,ReinforcedAttentionLearning,PAPO,DoMLLMsReallySeeIt}; \emph{(ii)} multi-stage workflows or curated visually-grounded data \citep{Look-Back,ghosal2026visrefvisualrefocusingthinking,wang2025vlrethinkerincentivizingselfreflectionvisionlanguage}; and \emph{(iii)} tool-augmented reasoning with external visual operations such as zooming or cropping \citep{zheng2026deepeyes,hong2026deepeyesv,su2025openthinkimglearningthinkimages}. These methods, however, either supervise all tokens uniformly or rely on external tool scaffolding. RAPT \citep{SeeingbutNotBelieving} further shows that deep-layer attention often localizes the correct evidence even when the final answer is wrong, suggesting that the bottleneck lies in precise attention \emph{supervision} rather than stronger attention. As our closest counterpart \citep{li2026rethinking}, PEPO reweights tokens using visual similarity—an indirect semantic proxy—whereas we comprehensively capture visual sensitivity via JSD and the entropy gap. 

\section{Conclusion}
In this paper, we revisit entropy-driven token selection in visual reasoning and show a counterintuitive failure mode: this mechanism sometimes overlooks vision-sensitive tokens that naturally exhibit low entropy but carry crucial visual evidence. Building on this diagnosis, we propose \textbf{VEPO}, a novel RL framework that explicitly couples visual sensitivity with token entropy through a multiplicative aggregation, redirecting gradient credit toward tokens that are simultaneously visually grounded and informative. Extensive experiments and ablations confirm VEPO's effectiveness. It achieves leading performance at comparable scales, surpassing the entropy-only baseline by +2.28 and +3.15 points on 7B and 3B models. We hope this work motivates further investigation into modality-aware credit assignment in multimodal reasoning.

\clearpage
\section*{Limitations}
Our work has a few limitations that warrant further investigation. First, the auxiliary forward pass required for visual perturbation introduces additional training overhead, though this could potentially be mitigated by advanced KV-cache reuse. Second, while Gaussian noise is an effective default counterfactual and we explicitly evaluated the no-image condition (\Cref{sec:appendix:Ablations on Visual Perturbation}) as an alternative perturbation, current computational limitations restricted our exploration of broader noise variants and sophisticated scheduling techniques like annealing \citep{liu2025noisyrolloutreinforcingvisualreasoning}. Lastly, while VEPO demonstrates strong performance and we offer an intuitive interpretation for its empirical effectiveness, providing a rigorous theoretical foundation for why visual-sensitive token selection improves multimodal RL optimization is left for future work.

\bibliography{custom}

\clearpage
\appendix

\section{Preliminary Experiment Setup} \label{appendix:Preliminary Experiments Setup}

\subsection{Setup for Preliminary Experiments in Section 2.1} \label{appendix:Preliminary Details:Preliminary Setup 2.1}
\paragraph{Setup.}
We compare three token selection strategies under the 80/20 rule \citep{wang2026beyond} in the standard GRPO\citep{shao2024deepseekmathpushinglimitsmathematical} setting:
(1)~\textbf{Full-token}, where all valid response tokens participate in the policy gradient update; 
(2)~\textbf{High-entropy}, which retains only the top-$k$ fraction of tokens ranked by policy entropy; and 
(3)~\textbf{Random}, which keeps each valid token independently with probability $p$ via Bernoulli sampling. 
For the high-entropy and random strategies, we sweep the retention ratio over $k \in \{0.1, 0.2, 0.4, 0.8\}$, and compare them against the full-token baseline ($k = 1.0$).

We conduct experiments on both LLMs and VLMs. For the LLM setting, we train Qwen2.5-VL-7B on a 54.4K mathematical reasoning dataset using the DAPO \citep{yu2026dapo} , with 8 H800 GPUs, a batch size of 256, 8 rollouts per prompt, and a maximum response length of 10,240 tokens.  We evaluate on AIME'24 . For the VLM setting, we follow the NoisyRollout setup for fair comparison and train Qwen2.5-VL-7B-Instruct on MMK12-mini (2.1K samples) using EasyR1\cite{zheng2025easyr1}, with 8 GPUs, a global batch size of 128, 12 rollouts per prompt, and a maximum response length of 2,048 tokens, while keeping the vision encoder frozen throughout training. We evaluate all settings on MathVerse \citep{mathverse}, MathVision \citep{wang2024measuringmultimodalmathematicalreasoning}, MathVista \citep{mathvista}, We-Math \citep{qiao2024wemathdoeslargemultimodal}, and HallusionBench \citep{Hallusionbench}. Both the LLM and VLM settings use a learning rate of $1 \times 10^{-6}$ and token-mean loss aggregation.

\subsection{Setup for Preliminary Experiments in Section 2.2} \label{appendix:Preliminary Details:Preliminary Setup 2.2}
\paragraph{Experiment Setup.}
To empirically investigate the relationship between policy entropy and visual dependency at the token level, we conduct an inference-time analysis using the same setup as Appendix~\ref{appendix:Preliminary Details:Preliminary Setup 2.1}. For each sample, we perform autoregressive generation with greedy decoding (temperature=0, top-$k$=1.0) and a maximum generation length of 256 tokens. 

\paragraph{Visual Dependence Metrics.}

During generation, we maintain two parallel decoding streams that share the same generated token sequence but differ in visual input: (1) a \emph{factual} stream conditioned on the original image $I$, and (2) a \emph{counterfactual} stream conditioned on a perturbed image $I'$. 
At each decoding step, we record three token-level signals: (i) the Jensen-Shannon divergence (JSD) between the factual distribution $P_t$ and the counterfactual distribution $Q_t$ over the full vocabulary, (ii) the Shannon entropy $H_t$ of the factual distribution, and (iii) the entropy gap
\[
\Delta H_t = H_t^{\mathrm{mask}} - H_t,
\]
which measures how degraded visual evidence changes model uncertainty. 

\paragraph{Analysis Protocol.}
We aim to quantify how well policy entropy serves as a proxy for visual dependency. For each token selection ratio $k$, we first construct an entropy-based selection set $A_{\mathrm{ent}}$ by taking the top-$k$ fraction of tokens ranked by entropy. We then compare it against two visual-dependency sets, denoted by $A_{\mathrm{vis}}$, which are defined as the top-$k$ tokens ranked by JSD and by $|\Delta H|$, respectively. We measure the overlap using recall, defined as
\[
\mathrm{Recall}
=
\frac{|A_{\mathrm{ent}} \cap A_{\mathrm{vis}}|}{|A_{\mathrm{vis}}|},
\]
where $|A_{\mathrm{ent}} \cap A_{\mathrm{vis}}|$ denotes the number of tokens selected by both methods, and $|A_{\mathrm{vis}}|$ is the size of the visual-dependency set.
\begin{table*}[!t]
\centering
\scriptsize
\setlength{\tabcolsep}{6pt}
\renewcommand{\arraystretch}{1.2}
\resizebox{\linewidth}{!}{
\begin{tabular}{l|ccccccccc}
\Xhline{1.2pt}
\rowcolor{CadetBlue!20}
\textbf{Dataset / Setting} & \textbf{Full} & \textbf{Top-0.1} & \textbf{Random-0.1} & \textbf{Top-0.2} & \textbf{Random-0.2} & \textbf{Top-0.4} & \textbf{Random-0.4} & \textbf{Top-0.8} & \textbf{Random-0.8} \\
\Xhline{1.2pt}

\hline
\rowcolor{gray!10} AIME'24 Turn 1 & 19.95  & 21.21 & 13.40 & 21.63 & 16.45 & 19.96 & 16.75 & 19.15 & 17.50 \\
\rowcolor{gray!10} AIME'24 Turn 2  & 20.62  & 19.75  & 16.30  & 22.49 & 15.62 & 20.58 & 15.80 & 20.10 & 17.90 \\
\rowcolor{gray!10} AIME'24 Turn 3    & 19.38 & 21.04 & 12.70  & 20.01 & 17.20 & 19.30 & 15.63 & 19.58 & 16.80 \\
\hline
\multicolumn{10}{c}{\textit{Avg.(Std.)}} \\
\hline
Avg.(Std.)       & 19.98(±0.26) & 20.67(±0.43) & 14.13(±2.39) & 21.38(±1.06) & 16.42(±0.42) & 19.95(±0.27) & 16.06(±0.24) & 19.61(±0.15)& 17.40(±0.20) \\
\Xhline{1.2pt}
\end{tabular}
}
\vspace{-0.3em}
\caption{Preliminary comparison of different token selection strategies in the LLM setting across three independent training runs. We compare full-token training, entropy-based Top-$k$\% selection, and random token selection under multiple retention ratios, and additionally report the mean and standard deviation across runs. The results clearly exhibit the 80/20 effect in language-only reasoning: entropy-based selection consistently outperforms random selection and achieves the best performance under sparse token retention.}
\label{tab:performance_simplified}
\end{table*}

\begin{table*}[!t]
\centering
\scriptsize
\setlength{\tabcolsep}{6pt}
\renewcommand{\arraystretch}{1.2}
\resizebox{\linewidth}{!}{
\begin{tabular}{l|ccccccccc}
\Xhline{1.2pt}
\rowcolor{CadetBlue!20}
\textbf{Dataset / Setting} & \textbf{Full} & \textbf{Top-0.1} & \textbf{Random-0.1} & \textbf{Top-0.2} & \textbf{Random-0.2} & \textbf{Top-0.4} & \textbf{Random-0.4} & \textbf{Top-0.8} & \textbf{Random-0.8} \\
\Xhline{1.2pt}

\multicolumn{10}{c}{\textit{Turn 1}} \\
\hline
\rowcolor{gray!10} HalluBench & 71.60  & 68.77 & 70.91 & 69.30 & 69.19 & 69.30 & 68.98 & 69.51 & 69.19 \\
\rowcolor{gray!10} MathVista  & 71.10  & 71.90  & 70.70  & 71.10 & 70.40 & 71.70 & 71.50 & 70.20 & 70.30 \\
\rowcolor{gray!10} We-Math    & 66.78 & 64.71 & 65.40  & 66.61 & 65.63 & 66.90 & 66.03 & 66.49 & 65.57 \\
\rowcolor{gray!10} MathVerse  & 47.16 & 46.90  & 45.96 & 46.68 & 47.79 & 47.66 & 46.52 & 47.21 & 47.23 \\
\rowcolor{gray!10} MathVision & 27.20  & 27.39 & 26.32 & 27.20 & 27.53 & 27.73 & 27.91 & 28.22 & 27.62 \\
\rowcolor{gray!10} Avg.       & 56.77 & 55.93 & 55.86 & 56.18 & 56.11 & 56.66 & 56.19 & 56.33 & 55.98  \\
\hline

\multicolumn{10}{c}{\textit{Turn 2}} \\
\hline
\rowcolor{gray!10}HalluBench & 68.24 & 70.03 & 70.98 & 70.14 & 69.61 & 69.61 & 70.35 & 69.40 & 69.30 \\
\rowcolor{gray!10}MathVista  & 69.90 & 69.50 & 69.07 & 70.10 & 69.80 & 71.00 & 70.30 & 70.70 & 70.20 \\
\rowcolor{gray!10}We-Math    & 68.10 & 65.00 & 65.92 & 66.09 & 65.98 & 67.82 & 64.48 & 65.23 & 67.30 \\
\rowcolor{gray!10}MathVerse  & 46.88 & 47.84 & 45.63 & 47.06 & 47.01 & 48.30 & 46.32 & 46.88 & 46.29 \\
\rowcolor{gray!10}MathVision & 28.66 & 28.30 & 27.77 & 26.84 & 27.93 & 27.48 & 27.51 & 27.76 & 27.35 \\
\rowcolor{gray!10}Avg.       & 56.36 & 56.13 & 55.87 & 56.05 & 56.07 & 56.84 & 55.79 & 55.99 & 56.08 \\
\hline

\multicolumn{10}{c}{\textit{Turn 3}} \\
\hline
\rowcolor{gray!10} HalluBench & 68.66 & 69.19 & 69.40 & 69.51 & 69.30 & 69.51 & 69.93 & 69.19 & 68.77 \\
\rowcolor{gray!10} MathVista  & 71.10 & 71.00 & 70.70 & 70.70 & 70.60 & 69.60 & 70.90 & 70.70 & 69.90 \\
\rowcolor{gray!10} We-Math    & 67.01 & 66.03 & 65.29 & 66.44 & 65.92 & 66.67 & 66.15 & 65.80 & 66.72 \\
\rowcolor{gray!10} MathVerse  & 47.26 & 47.18 & 46.02 & 47.92 & 46.73 & 46.83 & 46.83 & 47.31 & 47.39 \\
\rowcolor{gray!10} MathVision & 28.27 & 28.06 & 26.25 & 27.91 & 27.16 & 27.62 & 27.25 & 28.36 & 28.03 \\
\rowcolor{gray!10} Avg.       & 56.46 & 56.29 & 55.53 & 56.50 & 55.94 & 56.03 & 56.21 & 56.27 & 56.16 \\
\hline

\multicolumn{10}{c}{\textit{Avg.(Std.)}} \\
\hline
Avg.(Std.)       & 56.53(±0.03) & 56.12(±0.02) & 55.75(±0.02) & 56.24(±0.04) & 56.04(±0.01) & 56.51(±0.12) & 56.06(±0.04) & 56.20(±0.02)& 56.07(±0.01) \\

\Xhline{1.2pt}
\end{tabular}
}
\vspace{-0.3em}
\caption{Preliminary comparison of different token selection strategies in the VLM setting across three independent training runs. We compare full-token training, entropy-based Top-$k$\% selection, and random token selection under multiple retention ratios, and additionally report the mean and standard deviation across runs. Overall, entropy-based selection provides only limited and unstable gains in the VLM setting, remaining close to random selection and generally failing to consistently surpass the full-token baseline.}
\label{tab:performance_simplified_vlm}
\end{table*}

\begin{figure*}[H]
\centering
\begin{tcolorbox}[
    enhanced,
    width=0.96\linewidth,
    colback=gray!8,
    colframe=green!45!black,
    boxrule=0.6pt,
    arc=4mm,
    left=3mm,
    right=3mm,
    top=2mm,
    bottom=2mm,
    title={\Large\bfseries Generated Sample by VEPO-7B},
    colbacktitle=green!35!white,
    coltitle=white,
    fonttitle=\bfseries
]

\begin{minipage}[t]{0.32\linewidth}
\vspace{0pt}
\small
\textbf{Question.} The running result of the following program is \_\_\_.

\vspace{0.4em}
\includegraphics[width=\linewidth]{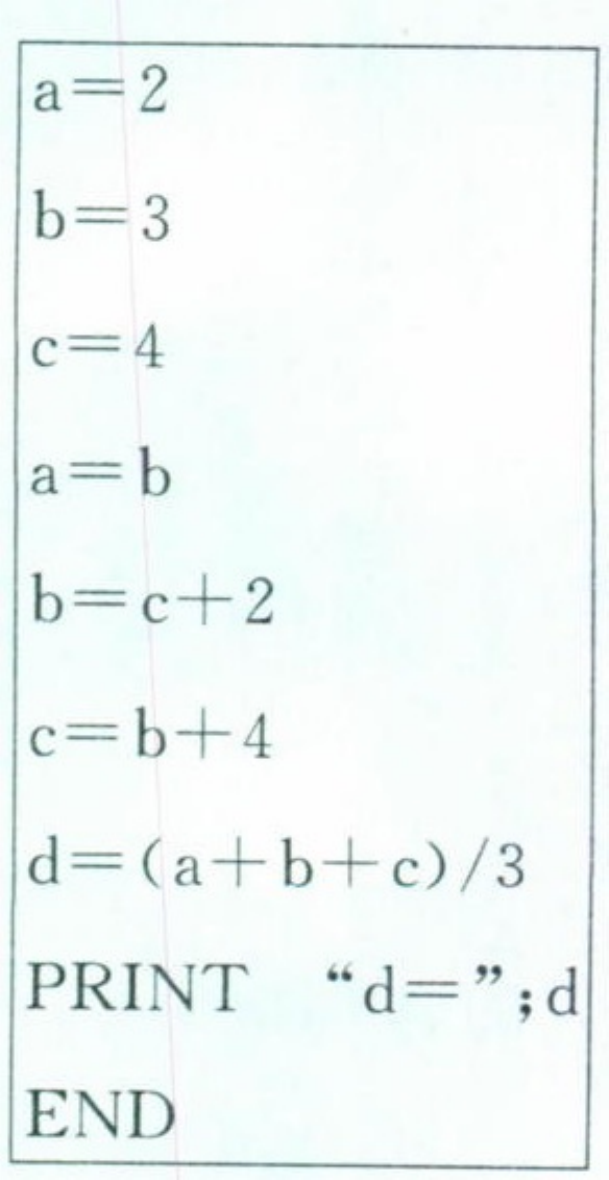}

\vspace{0.4em}
{\color{green!60!black}\textbf{Ground truth:} $\boxed{\frac{19}{3}}$}
\end{minipage}
\hfill
\begin{minipage}[t]{0.64\linewidth}
\vspace{0pt}
\small
\textbf{\texttt{<think>}}

The reasoning process is as follows:

\begin{enumerate}
    \item Initialize the variables:
    \[
    a = 2,\quad b = 3,\quad c = 4.
    \]

    \item Assign $b$ to $a$:
    \[
    a = b = 3.
    \]

    \item Assign $c + 2$ to $b$:
    \[
    b = c + 2 = 4 + 2 = 6.
    \]

    \item Assign $b + 4$ to $c$:
    \[
    c = b + 4 = 6 + 4 = 10.
    \]

    \item Compute $d$ as the average of $a$, $b$, and $c$:
    \[
    d = \frac{a+b+c}{3}
      = \frac{3+6+10}{3}
      = \frac{19}{3}.
    \]

    \item Therefore, the printed result is
    \[
    \frac{19}{3}.
    \]
\end{enumerate}

\textbf{\texttt{</think>}}

\vspace{0.5em}
The correct answer is {\color{green!60!black}\fbox{$\dfrac{19}{3}$}}.
\end{minipage}

\end{tcolorbox}
\caption{A qualitative case study on a visually grounded program-tracing problem. The model correctly tracks variable assignments and arithmetic updates from the program image, and outputs the correct final answer.}
\label{fig:case_program_math_2}
\end{figure*}
\section{More Detailed Results on Preliminary Experiments} \label{appendix:Preliminary Details} \label{appendix:Preliminary Details:More Preliminary Results.}
\Cref{tab:performance_simplified} and~\Cref{tab:performance_simplified_vlm} report preliminary comparisons of different token selection strategies across three independent runs in the LLM and VLM settings, respectively, including full-token training, entropy-based Top-$k$\% selection, and random selection.

We first examine the LLM setting. The results clearly support the 80/20 rule in language-only reasoning: entropy-based Top-$k$\% selection consistently outperforms Random-$k$\%, and the gains are most pronounced when $k$ is small. In particular, sparse selection performs best, and both Top-10\% and Top-20\% surpass the full-token baseline. This suggests that, in LLM reasoning, a small subset of high-entropy tokens indeed contributes disproportionately to exploration and optimization.

In contrast, the VLM setting exhibits a markedly different pattern. Here, entropy-based token selection provides only limited and unstable benefits: its performance is generally close to that of random selection and does not consistently outperform the full-token baseline. A representative phenomenon is that the best entropy-based variant remains only on par with, or slightly below, the full-token setting, while the gap between Top-$k$\% and Random-$k$\% remains small across different retention ratios. This indicates that, for multimodal reasoning, token entropy alone is not a reliable proxy for visually critical positions.

Overall, these preliminary results reveal a clear discrepancy between the two settings: while entropy-based token selection is effective in pure language reasoning and follows the 80/20 rule \citep{wang2026beyond}, its advantage becomes much weaker in visual reasoning, where token importance must additionally reflect visual dependency rather than uncertainty alone.
\section{Implementation  Details of VEPO}
\begin{table}[H]
\centering
\small
\setlength{\tabcolsep}{8pt}
\begin{tabular}{lc}
\toprule
\textbf{Hyperparameter} & \textbf{Value} \\
\midrule
Global batch size & 128 \\
Rollout batch size & 512 \\
Rollout number & 12 \\
Rollout temperature & 1.0 \\
Freeze vision tower & True \\
KL loss & False \\
Learning rate & $1.0 \times 10^{-6}$ \\
Weight decay & $1.0 \times 10^{-2}$ \\
Noisy augmentation & Gaussian \\
Noise step & 500 \\
\bottomrule
\end{tabular}
\caption{Main training configurations for VEPO.}
\label{tab:experiment_hyperparameters}
\end{table}

\begin{table*}[!t] 
\centering
\setlength{\tabcolsep}{6pt} 
\renewcommand{\arraystretch}{1.2}
\resizebox{\linewidth}{!}{
\begin{tabular}{l|cccccccc}
\Xhline{1.2pt}
\rowcolor{CadetBlue!20} 
\textbf{Model / Dataset} & \textbf{Geo3K} & \textbf{MMK12} & \textbf{HalluBench} & \textbf{MathVista} & \textbf{We-Math} & \textbf{MathVerse} & \textbf{MathVision} & \textbf{Avg.} \\
\Xhline{1.2pt}
\multicolumn{9}{c}{\textit{Visual-focused RL Fine-tuning Methods (3B)}} \\
\hline
\rowcolor{gray!10} NoisyRollout-3B \citep{liu2025noisyrolloutreinforcingvisualreasoning} & 42.26 & 58.12 & \textbf{63.85} & 62.20 & 61.78 & 40.94 & 25.07 & 50.56 \\
\rowcolor{gray!10} PAPO-DAPO-3B \citep{PAPO} & {42.76} & \uline{58.85} & \uline{62.36} & \uline{63.56} & 62.30 & 40.79 & 25.10 & \uline{50.82} \\
\rowcolor{gray!10} VPPO-3B \citep{SpotlightonToken} &  42.10 & 58.24 & 58.46 & \textbf{64.20} & \uline{62.64} & \uline{41.57} & 24.54 & 50.25 \\
\rowcolor{gray!10} R1-ShareVL \citep{yao2026rsharevl} & 39.93 & 56.26 & 61.41 & 60.30 & 61.26& 40.40 & 25.23 & 49.27 \\
\hline
\multicolumn{9}{c}{\textit{Our Method}} \\
\hline   
\rowcolor{gray!10} Qwen2.5-VL-3B-Instruct \citep{DBLP:journals/corr/abs-2502-13923} & 26.96 & 45.35 & 60.88 & 56.50 & 53.97 & 32.90 & 21.78 & 42.63 \\
\rowcolor{gray!10}
\hspace{2em}+ GRPO & \uline{43.43} & 57.37 & 59.52 & 62.16 & 61.15 & 40.95 & \uline{25.57} & 50.02 \\
\rowcolor{gray!10}
\hspace{2em}+ GRPO (Top 20\% high entropy)
& 38.28 & 56.81 & 59.20 & 61.40 & 59.66 & 37.74 & 24.18 & 48.18 \\
\rowcolor{gray!10}
\hspace{2em}+ GRPO (Top 40\% high entropy)
& 40.77 & 54.52 & 59.73 & 62.60 & 59.77 & 38.78 & 24.05 & 48.60 \\
\rowcolor{gray!10}
\hline
\rowcolor{gray!10}
\hspace{2em}+ \textbf{VEPO}
& \textbf{44.93} & \textbf{58.86} & 60.46 & 62.80 & \textbf{62.70} & \textbf{43.32} & \textbf{26.25} & \textbf{51.33} \\[-1.9pt]
\rowcolor{gray!10} 
\hspace{2em}${\triangle}$  vs. GRPO(Top 20\% high entropy)
 & {\textcolor{green(pigment)}{(+6.65)}}
 & {\textcolor{green(pigment)}{(+2.05)}}
 & {\textcolor{green(pigment)}{(+1.26)}}
 & {\textcolor{green(pigment)}{(+1.40)}}
 & {\textcolor{green(pigment)}{(+3.04)}}
 & {\textcolor{green(pigment)}{(+5.58)}}
 & {\textcolor{green(pigment)}{(+2.07)}}
 & {\textcolor{green(pigment)}{(+3.15)}} \\
\Xhline{1.2pt}
\end{tabular}
}
\vspace{-0.3em}
\caption{Additional results on the 3B model. Consistent with our main findings, VEPO (bottom) outperforms all corresponding baselines at this scale. Best scores are in \textbf{bold}, and the second-best are \uline{underlined}.}
\label{tab:performance_simplified_3B}
\vspace{-1em}
\end{table*}
\begin{table*}[!t]
\centering
\tiny
\quad
\setlength{\tabcolsep}{6pt}
\renewcommand{\arraystretch}{1.2}
\resizebox{\linewidth}{!}{
\begin{tabular}{l|cccccccc}
\Xhline{1.2pt}
\rowcolor{CadetBlue!20}
\textbf{$\alpha$ / Dataset} & \textbf{Geo3K} & \textbf{MMK12} & \textbf{HalluBench} & \textbf{MathVista} & \textbf{We-Math} & \textbf{MathVerse} & \textbf{MathVision} & \textbf{Avg.} \\
\Xhline{1.2pt}

\multicolumn{9}{c}{\textit{$k$ = 0.2}} \\
\hline
\rowcolor{gray!10}
$\alpha$ = 0.1  & 47.09 & 67.53 & 68.77 & 69.30 & 66.44 & 48.32 & 28.13 & 56.51 \\
\rowcolor{gray!10}
$\alpha$ = 0.3  & 48.25 & 67.78 & 68.24 & 71.30 & 67.24 & 48.43 & 27.20 & 56.92 \\
\rowcolor{gray!10}
$\alpha$ = 0.5  & 49.42 & 68.65 & 68.45 & 70.80 & 67.99 & 48.91 & 27.27 & 57.36 \\
\rowcolor{gray!10}
$\alpha$ = 0.7  & 51.58 & \textbf{69.64} & \textbf{71.71} & \textbf{72.00} & \textbf{69.54} & 48.93 & 28.31 & \textbf{58.82}\\
\rowcolor{gray!10}
$\alpha$ = 0.9  & 50.08 & 67.53 & 68.45 & 71.30 & 67.36 & \textbf{50.10} & 27.86 & 57.53 \\
\hline

\multicolumn{9}{c}{\textit{$k$ = 0.3}} \\
\hline
\rowcolor{gray!10}
$\alpha$ = 0.1 & 49.25 & 66.17 & 68.03 & 70.00 & 65.46 & 48.91 & 26.91 & 56.39 \\
\rowcolor{gray!10}
$\alpha$ = 0.3 & 50.75 & 68.65 & 70.70 & 71.70 & 68.10 & 49.82 & 28.22 & 58.28 \\
\rowcolor{gray!10}
$\alpha$ = 0.5 & 50.92 & 68.16 & 71.29 & 70.30 & 68.28 & 49.90 & 28.12 & 58.14 \\
\rowcolor{gray!10}
$\alpha$ = 0.7 & \textbf{52.08} & 68.40 & 69.51 & 71.50 & 68.16 & 48.55 & 27.70 & 57.99 \\
\rowcolor{gray!10}
$\alpha$ = 0.9 & 48.92 & 68.03 & 67.51 & 69.80 & 67.99 & 48.20 & 27.50 & 56.85 \\
\hline

\multicolumn{9}{c}{\textit{$k$ = 0.4}} \\
\hline
\rowcolor{gray!10}
$\alpha$ = 0.1 & 47.42 & 66.42 & 67.30 & 68.90 & 67.07 & 44.31 & \textbf{28.79} & 55.74 \\
\rowcolor{gray!10}
$\alpha$ = 0.3 & 48.25 & 66.91 & 69.30 & 64.50 & 67.36 & 48.15 & 25.93 & 55.77 \\
\rowcolor{gray!10}
$\alpha$ = 0.5 & 48.25 & 64.81 & 67.19 & 70.30 & 68.39 & 46.27 & 26.61 & 55.97 \\
\rowcolor{gray!10}
$\alpha$ = 0.7 & 51.25 & 68.77 & 68.56 & 69.60 & 67.76 & 47.66 & 28.06 & 57.38 \\
\rowcolor{gray!10}
$\alpha$ = 0.9 & 48.92 & 67.66 & 67.82 & 68.90 & 66.55 & 46.98 & 27.07 & 56.27 \\
\Xhline{1.2pt}
\end{tabular}
}
\vspace{-0.3em}
\caption{Detailed numerical results for the ablation on the token selection ratio $k$ and the balancing coefficient $\alpha$. The table reports the performance on each benchmark together with the overall average for every $(k,\alpha)$ configuration.}
\label{tab:ablation_hyperpara_appendix}

\end{table*}

\label{appendix:implementation details}
The main training config are summarized in Table~\ref{tab:experiment_hyperparameters}. We use a global batch size of 128 and a rollout batch size of 512, with 12 rollouts per prompt at a rollout temperature of 1.0. The policy model is optimized using a learning rate of $1.0 \times 10^{-6}$ and a weight decay of $1.0 \times 10^{-2}$. To improve efficiency and stability, we freeze the vision tower, disable KL loss, and adopt tensor parallelism with size 4 on 8 H800 GPUs. For noisy visual augmentation, we apply Gaussian perturbation at diffusion step 500 and use a sigmoid decay schedule with coefficient 30 and midpoint step 40.
For automatic evaluation, we use an external judge model (\texttt{gpt-4o-mini})  to parse model outputs and score final answers.
\paragraph{Dataset and Benchmark Details.}
For in-domain evaluation, we report results on the validation splits of Geometry3K \citep{geo3k} and MMK12 \citep{meng2025mm}, which measure performance within the training distribution. For out-of-domain evaluation, we use five public benchmarks to assess generalization beyond the training domains: four visual mathematical reasoning benchmarks, including MathVerse \citep{mathverse}, MathVision \citep{wang2024measuringmultimodalmathematicalreasoning}, MathVista \citep{mathvista}, and We-Math \citep{qiao2024wemathdoeslargemultimodal}, together with the visual perception benchmark HallusionBench \citep{Hallusionbench}.

\begin{table*}[!t]
\centering
\scriptsize
\setlength{\tabcolsep}{5pt}
\renewcommand{\arraystretch}{1.2}
\resizebox{\linewidth}{!}{
\begin{tabular}{l|cccccccc}
\Xhline{1.2pt}
\rowcolor{CadetBlue!20}
\textbf{Variant} & \textbf{Geo3K} & \textbf{MMK12} & \textbf{HalluBench} & \textbf{MathVista} & \textbf{We-Math} & \textbf{MathVerse} & \textbf{MathVision} & \textbf{Avg.} \\
\Xhline{1.2pt}
\rowcolor{gray!10}
Qwen2.5-7B-Instruct & 35.94 & 53.66 & 63.51 & 66.87 & 62.13 & 43.27 & 25.10 & 50.07 \\
\rowcolor{gray!10}
GRPO & 49.42 & 65.18 & 70.45 & 71.60 & 67.18 & 48.53 & 27.40 & 57.11 \\
\rowcolor{gray!10}
Ours w/ Masked Image & \textbf{52.58} & 68.65 & 71.19 & 71.60 & 68.65 & \textbf{49.21} & 27.93 & 58.54 \\
\rowcolor{gray!10}
Ours w/ Gaussian Noise & 51.58 & \textbf{69.64} &\textbf{ 71.71} & \textbf{72.00} & \textbf{69.54} & 48.93 & \textbf{28.31} & \textbf{58.82} \\
\Xhline{1.2pt}
\end{tabular}
\vspace{-0.3em}
}
\caption{Ablation on the role of visual perturbation in VEPO. We compare the base model, standard GRPO, our method with masked image input, and the full VEPO variant with Gaussian perturbation.}
\label{tab:ablation_visual_perturbation}
\end{table*}
\paragraph{Baseline.} \label{appendix:Baseline}
We compare our method against four groups of baselines: (1) closed-source multimodal models, including GPT-4o \citep{openai2024gpt4ocard}, o1 \citep{openai2026openaio1card}, Claude-3.7-Sonnet \citep{claude37}, and Gemini-2.0-Flash \citep{geminiteam2025gemini25}; (2) open-source large-scale vision-language models, including QVQ-72B-Preview \citep{qvq-72b-preview}, Qwen2.5-VL-32B \citep{DBLP:journals/corr/abs-2502-13923}, Qwen2.5-VL-72B \citep{DBLP:journals/corr/abs-2502-13923}, InternVL2.5-38B \citep{chen2025internvl}, and InternVL2.5-78B \citep{chen2025internvl}; (3) visual-focused RL fine-tuning methods, including \textbf{VPPO} \citep{SpotlightonToken}, \textbf{PAPO-DAPO} \citep{PAPO}, \textbf{R1-ShareVL} \citep{yao2026rsharevl}, and \textbf{NoisyRollout} \citep{liu2025noisyrolloutreinforcingvisualreasoning}, which are trained on the same 4.2K mixed training set for a fair 
comparison; and (4) two baselines that isolate the effect of 
entropy-based token selection: the top-entropy variant 
\textbf{80/20 Rule}~\citep{wang2026beyond} and the full-token 
\textbf{GRPO}~\citep{shao2024deepseekmathpushinglimitsmathematical} baseline.

\section{Results on Qwen2.5-VL-3B-Instruct} \label{appendix:Results on Qwen2.5-VL-3B-Instruct}
VEPO achieves the best average of \textbf{51.33}, surpassing the strongest visual-focused RL baseline PAPO-DAPO-3B (50.82, based on DAPO \citep{yu2026dapo}) and exceeding VPPO-3B by \textbf{+1.08}, also improving over the GRPO (Top 20\% high entropy) baseline by \textbf{+3.15}, while ranking first on six of seven benchmarks. 

\section{More Ablation Studies and Analysis} \label{appendix:More Ablation Studies}
\subsection{Detailed Results for Ablation on $\alpha$ and $k$}
\label{appendix:Ablation a and k}
The coefficient $\alpha$ balances the contribution of the distribution-shift signal JSD and the entropy-gap signal $|\Delta H|$, while $k$ controls the fraction of top-ranked tokens selected for policy updates in each response. We vary $\alpha$ in $\{0.1, 0.3, 0.5, 0.7, 0.9\}$ and $k$ in $\{0.2, 0.3, 0.4\}$ to examine the impact of different scoring preferences and selection granularities. The extreme cases $\alpha=0$ and $\alpha=1$ are excluded here, as they correspond to component-wise variants and are analyzed separately in our scoring component ablation. Table~\ref{tab:ablation_hyperpara_appendix} reports the detailed numerical results of the hyperparameter ablation on the token selection ratio $k$ and the balancing coefficient $\alpha$. These results complement the trends summarized in Figure~\ref{fig:ablation_alpha_k} in the main text.

\subsection{Ablations on Visual Perturbation}\label{sec:appendix:Ablations on Visual Perturbation}
We further analyze the role of visual perturbation by comparing two VEPO variants: one using masked image input and the other using Gaussian-noise perturbation. As shown in Table~\ref{tab:ablation_visual_perturbation}, the Gaussian variant achieves the best overall average performance (58.82 vs.\ 58.54) and outperforms the masked-image variant on five of the seven benchmarks, including MMK12, HalluBench, MathVista, We-Math, and MathVision. The gains are especially consistent on benchmarks that require stronger visual perception and fine-grained grounding, such as HalluBench and MathVision.

This pattern suggests that the form of perturbation plays an important role in token selection. Compared with masked image input, Gaussian noise perturbs the visual evidence while largely preserving the global image structure. As a result, the discrepancy between the original-image and perturbed-image predictions more faithfully reflects fine-grained visual dependency, rather than a broader degradation caused by an uninformative visual input. In this sense, Gaussian perturbation provides a more informative counterfactual signal for identifying visually dependent tokens.

Overall, this comparison supports the design choice in VEPO: the benefit comes not merely from contrasting two visual conditions, but from using a perturbation that preserves enough structure to reveal meaningful token-level visual sensitivity.

\section{Templates}
\label{appendix:templates}
\begin{tcolorbox}[
    colback=gray!5,
    colframe=black!70,
    boxrule=0.6pt,
    arc=2mm,
    width=0.95\linewidth,
    left=2mm,
    right=2mm,
    top=1mm,
    bottom=1mm
]
\small
\textbf{System Prompt:} \\
\texttt{You FIRST think about the reasoning process as an internal monologue and then provide the final answer. The reasoning process MUST BE enclosed within <think> </think> tags. The final answer MUST BE put in \textbackslash boxed\{\}.}
\vspace{0.4em}
\textbf{User Input:} \\
\texttt{\{question\}}
\end{tcolorbox}

\section{Theoretical Interpretation via Aleatoric--Epistemic Uncertainty Decomposition}
\label{appendix:theory}

In this appendix, we provide an information-theoretic interpretation of VEPO's joint vision-entropy token selection design (Eq.~\ref{eq:score_d} in the main paper). In particular, we show that the two visual signals: Jensen--Shannon Divergence (JSD, \citep{menendez1997jensen}) and the absolute entropy gap $|\Delta H_t|$ can be interpreted as capturing two complementary aspects of predictive change under visual perturbation. We further show that the multiplicative aggregation in $g_t$ admits a soft noisy-style OR \citep{pearl2014probabilistic}.

\subsection{Preliminaries}
\label{appendix:theory:prelim}

\paragraph{Setup.}
Recall from Eq.~(\ref{eq:forward}) that VEPO performs two forward passes at each token position $t$:
\begin{equation}
\begin{aligned}
P_t(\cdot) &= \pi_\theta(\cdot \mid q, I, o_{<t}), \\
Q_t(\cdot) &= \pi_\theta(\cdot \mid q, I', o_{<t}),
\end{aligned}
\label{eq:appendix-forward}
\end{equation}
where $I$ is the original image and $I'$ is its noise-perturbed counterpart. 

\paragraph{Visual condition.}
We introduce a binary variable $\Theta_v \in \{I, I'\}$ indicating whether the next-token prediction is conditioned on the original or perturbed image. Assuming a uniform prior $p(\Theta_v{=}I) = p(\Theta_v{=}I') = \tfrac{1}{2}$, the marginal predictive distribution of the next token $Y_t$ is
\begin{equation}
\begin{aligned}
M_t(\cdot)
&=
\mathbb{E}_{\Theta_v}\!\left[
\pi_\theta(\cdot \mid q, \Theta_v, o_{<t})
\right] \\
&=
\tfrac{1}{2}\bigl(P_t + Q_t\bigr),
\end{aligned}
\label{eq:appendix-mt}
\end{equation}
which coincides with the mixture distribution used in the definition of JSD (Eq.~\ref{eq:jsd}).

\paragraph{Uncertainty decomposition.}
Following the standard aleatoric--epistemic decomposition \citep{depeweg2018decomposition,houlsby2011bayesian,kendall2017uncertainties}, the predictive uncertainty of $Y_t$ under this visual-condition mixture can be decomposed as
\begin{equation}
\begin{aligned}
\underbrace{H(Y_t \mid q, o_{<t})}_{\text{Total Uncertainty (TU)}_t}
&=
\underbrace{I(Y_t; \Theta_v \mid q, o_{<t})}_{\text{Epistemic Component (EU)}_t} \\
&\quad +
\underbrace{\mathbb{E}_{\Theta_v}\!\left[
H(Y_t \mid q, \Theta_v, o_{<t})
\right]}_{\text{Aleatoric Uncertainty (AU)}_t}.
\end{aligned}
\label{eq:appendix-decomp}
\end{equation}
Here, the aleatoric term captures the residual uncertainty within each fixed visual condition, while the epistemic term captures the reducible disagreement across visual conditions.

\subsection{Proposition 1: JSD as a Visual-Condition Epistemic Component}
\label{appendix:theory:prop1}

\begin{proposition}[JSD as conditional mutual information]
\label{prop:jsd-eu}
Under the setup of~\S\ref{appendix:theory:prelim}, the Jensen--Shannon Divergence between $P_t$ and $Q_t$ is exactly the conditional mutual information between the next token and the visual condition:
\begin{equation}
\mathrm{JSD}(P_t \,\|\, Q_t)
=
I(Y_t; \Theta_v \mid q, o_{<t}).
\label{eq:appendix-prop1}
\end{equation}
\end{proposition}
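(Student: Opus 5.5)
The argument is the standard identity between JSD and the mutual information of a uniform two-component mixture. We can verify it by direct computation under the setup of \S\ref{appendix:theory:prelim}. Write the joint law of $(\Theta_v, Y_t)$ given $(q, o_{<t})$ as $p(\Theta_v{=}I, y) = \tfrac12 P_t(y)$ and $p(\Theta_v{=}I', y) = \tfrac12 Q_t(y)$. With this joint law, the conditional distribution of $Y_t$ given $\Theta_v$ is $P_t$ or $Q_t$, and the marginal of $Y_t$ is $M_t$ as in Eq.~(\ref{eq:appendix-mt}).

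The key step is to expand the mutual information as an expected KL divergence between conditional and marginal:
\begin{equation*}
I(Y_t;\Theta_v \mid q,o_{<t}) = \sum_{\theta} p(\theta)\, D_{\mathrm{KL}}\bigl(p(\cdot\mid \theta)\,\|\,p(\cdot)\bigr) = \tfrac12 D_{\mathrm{KL}}(P_t\|M_t) + \tfrac12 D_{\mathrm{KL}}(Q_t\|M_t).
\end{equation*}
The right-hand side is exactly Eq.~(\ref{eq:jsd}). The first equality follows from the definition $I = \sum_{\theta,y} p(\theta,y)\log\frac{p(\theta,y)}{p(\theta)p(y)}$ by factoring $p(\theta,y)=p(\theta)p(y\mid\theta)$.

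As a cross-check, we can write the same quantity as $H(M_t) - \tfrac12 H(P_t) - \tfrac12 H(Q_t)$. This shows it equals TU minus AU in Eq.~(\ref{eq:appendix-decomp}), so JSD is indeed the epistemic term.

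The only genuine obstacle is a well-definedness issue: the KL terms must be finite. This holds because $P_t \ll M_t$ and $Q_t \ll M_t$ by construction of the mixture. Tokens with $M_t(y)=0$ also contribute zero under the convention $0\log 0 = 0$. Since the vocabulary $\mathcal V$ is finite, all sums are finite and the manipulations need no further justification. We would also note that the identity depends on the uniform prior, since a nonuniform prior would give the weighted (generalized) JSD instead.
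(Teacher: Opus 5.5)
Your argument is correct, but it expands the mutual information differently from the paper. You write $I(Y_t;\Theta_v\mid q,o_{<t})$ as the prior-weighted KL divergence of each conditional from the marginal. That lands directly on the two-KL definition of $\mathrm{JSD}_t$ in Eq.~(\ref{eq:jsd}). The paper instead writes it as the entropy difference $H(M_t)-\tfrac12[H(P_t)+H(Q_t)]$. It then closes by citing the known identity (Lin, 1991) that the uniformly weighted JSD equals this same entropy expression.

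The paper's route makes the link to the total/aleatoric split in Eq.~(\ref{eq:appendix-decomp}) immediate, and that link is the interpretive point of the proposition. However, it needs an external identity to get back to the KL form of JSD that the method actually uses.

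Your route is self-contained relative to that definition, and slightly more robust. The KL form is bounded by $\log 2$ and never involves a difference of possibly infinite entropies. This does not matter here because $\mathcal{V}$ is finite, but it is why the KL form is the more general definition.

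Your entropy cross-check reproduces exactly the paper's computation. Your remarks on $P_t,Q_t\ll M_t$ and on the dependence on the uniform prior are correct details the paper leaves implicit.
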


\begin{proof}
By the definition of mutual information,
\begin{flalign}
&\hspace{1.1em} I(Y_t; \Theta_v \mid q, o_{<t}) && \notag\\
&\hspace{0.9em}= H(Y_t \mid q, o_{<t}) - H(Y_t \mid q, \Theta_v, o_{<t}) && \notag\\
&\hspace{0.9em}= H(M_t) - \mathbb{E}_{\Theta_v}\!\bigl[H(Y_t \mid q, \Theta_v, o_{<t})\bigr] && \notag\\
&\hspace{0.9em}= H(M_t) - \tfrac{1}{2}\bigl[H(P_t) + H(Q_t)\bigr]. &&
\label{eq:appendix-eu-expand}
\end{flalign}
Meanwhile, the Jensen--Shannon Divergence with uniform mixing weights \citep{lin1991divergence} satisfies
\begin{equation}
\begin{aligned}
\mathrm{JSD}(P_t \,\|\, Q_t)
&=
H(M_t) \\
&\quad - \tfrac{1}{2}\bigl[H(P_t) + H(Q_t)\bigr],
\end{aligned}
\label{eq:appendix-jsd-entropy}
\end{equation}
which matches Eq.~(\ref{eq:appendix-eu-expand}).
\end{proof}

\paragraph{Remark.}
Proposition~\ref{prop:jsd-eu} shows that JSD
is exactly the mutual information between the next token and the visual condition. In this sense, JSD quantifies how strongly the next-token prediction depends on the visual input.

\subsection{Proposition 2: $|\Delta H_t|$ as Condition-wise Aleatoric Change}
\label{appendix:theory:prop2}

\begin{proposition}[$|\Delta H_t|$ as aleatoric change]
\label{prop:dh-au}
The absolute entropy gap $|\Delta H_t|$ defined in Eq.~(\ref{eq:entropy_gap}) measures the magnitude of the change in conditional predictive entropy induced by visual perturbation:
\begin{equation}
\begin{aligned}
|\Delta H_t|
&=
\bigl|H(Y_t \mid q, \Theta_v{=}I', o_{<t}) \\
&\qquad - H(Y_t \mid q, \Theta_v{=}I, o_{<t})\bigr|.
\end{aligned}
\label{eq:appendix-prop2}
\end{equation}
\end{proposition}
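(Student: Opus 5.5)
The statement is essentially definitional, so the plan is to unfold the definitions and identify each entropy with a conditional entropy under the visual-condition variable $\Theta_v$ from \S\ref{appendix:theory:prelim}.

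\textbf{Identify the conditionals.} By Eq.~(\ref{eq:appendix-forward}), the conditional law of $Y_t$ given $(q,\Theta_v{=}I,o_{<t})$ is $P_t$. The law given $(q,\Theta_v{=}I',o_{<t})$ is $Q_t$. Here I use the pointwise convention for conditional entropy given a specific value, not the averaged one. This yields
\[
H(Y_t \mid q,\Theta_v{=}I,o_{<t}) = -\sum_{v\in\mathcal{V}} P_t(v)\log P_t(v) = H_t
\]
by Eq.~(\ref{eq:entropy_normal}). In the same way, Eq.~(\ref{eq:entropy_cf}) gives
\[
H(Y_t \mid q,\Theta_v{=}I',o_{<t}) = H_t^{\mathrm{mask}}.
\]

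\textbf{Substitute into the gap.} Plugging these into Eq.~(\ref{eq:entropy_gap}) gives $\Delta H_t = H(Y_t\mid q,\Theta_v{=}I',o_{<t}) - H(Y_t\mid q,\Theta_v{=}I,o_{<t})$. Taking absolute values then gives Eq.~(\ref{eq:appendix-prop2}) directly.

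\textbf{Interpretive remark.} I would add one observation to justify the word ``aleatoric.'' Under the uniform prior, the aleatoric term in Eq.~(\ref{eq:appendix-decomp}) is $\tfrac12(H_t+H_t^{\mathrm{mask}})$, which is the average of the two conditional entropies. So $|\Delta H_t|$ is exactly twice the deviation of either condition-wise entropy from that average. In other words, it measures how the aleatoric uncertainty changes across visual conditions, which is the complementary axis to the epistemic term of Proposition~\ref{prop:jsd-eu}.

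The only real obstacle is notational. In standard information theory, $H(Y\mid\Theta_v)$ denotes the average over $\Theta_v$. The proof therefore has to state explicitly that conditioning on the event $\Theta_v{=}\theta$ means the entropy of the conditional law, and that $\pi_\theta$ is taken as that conditional law. With this convention fixed, the argument is immediate and no inequality is needed.
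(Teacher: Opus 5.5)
Your proposal is correct and follows the paper's proof: both identify $H(Y_t \mid q,\Theta_v{=}I,o_{<t})$ and $H(Y_t \mid q,\Theta_v{=}I',o_{<t})$ with $H(P_t)=H_t$ and $H(Q_t)=H_t^{\mathrm{mask}}$, substitute these into Eq.~(\ref{eq:entropy_gap}), and take absolute values. Your note distinguishing the pointwise (event-conditioned) convention for conditional entropy from the averaged one, and your observation that $|\Delta H_t| = 2\,\bigl|H_t - \tfrac12(H_t+H_t^{\mathrm{mask}})\bigr|$, are both accurate; the paper leaves these points implicit.
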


\begin{proof}
Under each fixed visual condition, the conditional predictive entropy is
\begin{equation}
\begin{aligned}
H(Y_t \mid q, \Theta_v{=}I, o_{<t}) &= H(P_t), \\
H(Y_t \mid q, \Theta_v{=}I', o_{<t}) &= H(Q_t).
\end{aligned}
\end{equation}
Therefore, by Eq.~(\ref{eq:entropy_gap}),
\begin{equation}
\Delta H_t = H(Q_t) - H(P_t),
\end{equation}
and its absolute value equals the magnitude of the condition-wise entropy change.
\end{proof}

\paragraph{Remark.}
Unlike the aleatoric term in Eq.~(\ref{eq:appendix-decomp}), which averages uncertainty across visual conditions, $|\Delta H_t|$ measures how much the predictive entropy changes between them. A large $|\Delta H_t|$ therefore indicates that the token is sensitive to the visual input, whether the perturbation makes the model more confident or less confident.

\subsection{Proposition 3: Non-equivalence and Complementarity of JSD and $|\Delta H_t|$}
\label{appendix:theory:prop3}

\begin{proposition}[Non-equivalence and complementarity]
\label{prop:non_equiv}
The two visual signals $\mathrm{JSD}(P_t\|Q_t)$ and $|\Delta H_t| = |H(Q_t)-H(P_t)|$ capture complementary aspects of the predictive shift between $P_t$ and $Q_t$. In particular, neither signal determines the other alone:
(i) $|\Delta H_t|$ may vanish while $\mathrm{JSD}(P_t\|Q_t)$ remains strictly positive, and (ii) distinct pairs of distributions may share the same $|\Delta H_t|$ while exhibiting different JSD values.
\end{proposition}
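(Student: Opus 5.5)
The plan is to prove both parts by explicit counterexamples on a small vocabulary. Non-equivalence only needs a witness for each part, so concrete distributions suffice. The tools are Proposition~\ref{prop:jsd-eu}, which gives $\mathrm{JSD}(P_t\|Q_t)=H(M_t)-\tfrac12[H(P_t)+H(Q_t)]$, and the standard fact that JSD is nonnegative and vanishes iff $P_t=Q_t$. The latter follows from strict concavity of entropy (Jensen's inequality) applied to that same expression.

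\textbf{Part (i).} I would take $Q_t$ to be a permutation of $P_t$. Entropy is invariant under relabeling of the vocabulary, so $H(Q_t)=H(P_t)$ and hence $|\Delta H_t|=0$. For a concrete choice, let $\mathcal V=\{a,b\}$, $P_t=(p,1-p)$ and $Q_t=(1-p,p)$ with $p\neq\tfrac12$. Then $P_t\neq Q_t$, so $\mathrm{JSD}>0$ by the equality condition above. Explicitly, $M_t$ is uniform, so
\[
\mathrm{JSD}=\log 2 - h(p)>0,
\]
where $h$ is the binary entropy. This is exactly the case where visual corruption changes \emph{which} token is preferred without changing confidence. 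The entropy gap misses such shifts and JSD captures them.

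\textbf{Part (ii).} I would exhibit two pairs with the same entropy gap (for simplicity, both equal to zero) but different JSD. One pair is $(P,P)$, for which $\mathrm{JSD}=0$. The other is the permuted pair from part (i), for which $\mathrm{JSD}>0$.

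If a witness with nonzero common gap is preferred, the plan is to fix $P=(p,1-p)$ and compare two candidates for $Q$:
\begin{itemize}
\item $Q=(q,1-q)$ with $q<p<\tfrac12$, so $Q$ lies on the same side as $P$;
\item $Q'=(1-q,q)$, the mirror of $Q$.
\end{itemize}
Since $H(Q')=H(Q)$, both pairs give $|H(Q)-H(P)|=|H(Q')-H(P)|$. Mirroring $Q$ moves its mass away from $P$'s mode, so one expects $\mathrm{JSD}(P\|Q')>\mathrm{JSD}(P\|Q)$. To verify this, I would either substitute concrete numbers such as $p=0.3$, $q=0.1$, or argue generally: $M$ for the mirrored pair is closer to uniform, which raises $H(M)$ while leaving $\tfrac12[H(P)+H(Q)]$ unchanged.

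The main obstacle is this last comparison in the nonzero-gap case, which is a monotonicity argument about $H(M)$. The fallback is a direct numerical evaluation, which is fully rigorous for an existence claim. The case with zero common gap avoids the issue entirely, so I would present that first.
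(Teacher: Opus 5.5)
Your proposal is correct and follows essentially the paper's route: explicit counterexamples on a tiny vocabulary, exploiting the fact that entropy is invariant under relabeling of symbols while JSD is not (the paper's case (a) is your part (i) with $p=1$, padded with an unused third symbol, and its case (b) fixes $Q=(\tfrac12,\tfrac12,0)$ and relabels $P$ from $(1,0,0)$ to $(0,0,1)$, just as you fix $P$ and mirror $Q$). Your zero-gap witness for (ii) is valid but only restates (i), so the nonzero-gap version (the paper's pairs share the gap $\log 2$) is the more informative one; the monotonicity step you flagged is immediate, because the first coordinates of the unmirrored and mirrored mixtures lie at distances $\tfrac{1-p-q}{2}$ and $\tfrac{p-q}{2}$ from $\tfrac12$, and the latter is smaller exactly when $p<\tfrac12$.
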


\begin{proof}[Proof by construction]
We provide two examples on a 3-symbol vocabulary
$\mathcal{V}=\{v_1,v_2,v_3\}$.

\textbf{Case (a): Zero entropy gap but positive JSD.}
Let
\[
P_t = (1,0,0), \qquad Q_t = (0,1,0).
\]
Then both distributions are deterministic, so
\[
\begin{aligned}
H(P_t) &= H(Q_t) = 0, \\
|\Delta H_t| &= |H(Q_t)-H(P_t)| = 0.
\end{aligned}
\]
However, $P_t$ and $Q_t$ place all probability mass on different tokens, so
\[
\mathrm{JSD}(P_t\|Q_t)=\log 2 > 0.
\]

\textbf{Case (b): Same entropy gap but different JSD.}
Consider the two pairs:
\[
P_t^{(1)}=(1,0,0), \qquad
Q_t^{(1)}=\left(\tfrac12,\tfrac12,0\right),
\]
and
\[
P_t^{(2)}=(0,0,1), \qquad
Q_t^{(2)}=\left(\tfrac12,\tfrac12,0\right).
\]
For both pairs,
\begin{align*}
H(P_t^{(1)}) &= H(P_t^{(2)}) = 0, \\
H(Q_t^{(1)}) &= H(Q_t^{(2)}) = \log 2.
\end{align*}
Hence,
\[
|\Delta H_t^{(1)}| = |\Delta H_t^{(2)}| = \log 2.
\]
However, their Jensen--Shannon divergences differ:
\begin{align*}
\mathrm{JSD}(P_t^{(1)}\|Q_t^{(1)})
&=
\frac34 \log \frac43, \\
\mathrm{JSD}(P_t^{(2)}\|Q_t^{(2)})
&=
\log 2.
\end{align*}
Thus, the same entropy gap may correspond to substantially
different distributional disagreement.

Taken together, these examples show that $\mathrm{JSD}(P_t\|Q_t)$ and $|\Delta H_t|$ are non-equivalent and complementary: one may vanish while the other is positive, and even when the entropy gap is fixed, JSD can still vary substantially.
\end{proof}

\paragraph{Remark.}
Proposition~\ref{prop:non_equiv} clarifies the role of the two visual signals in VEPO. JSD captures disagreement between the predictive distributions under different visual conditions, whereas $|\Delta H_t|$ measures how much the corresponding predictive uncertainty changes. Because these signals encode different aspects of the predictive shift, relying on only one of them may fail to recover all visually sensitive tokens. This is consistent with the ablation results in \Cref{tab:ablation_components}, where removing either signal degrades performance.

\subsection{Proposition 4: $g_t$ as a soft noisy-OR–style Aggregation}
\label{appendix:theory:prop4}

\begin{proposition}[Soft noisy-OR aggregation]
\label{prop:noisy-or}
The joint visual-dependency score:
\begin{equation}
g_t = 1 - (1-\hat{j}_t)^\alpha (1-|\widehat{\Delta H}_t|)^{1-\alpha}
\label{eq:appendix-prop4}
\end{equation}
admits a noisy-OR–style interpretation as a soft union over two complementary visual-sensitivity cues.
\end{proposition}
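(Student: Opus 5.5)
The plan is to recover $g_t$ from the classical noisy-OR model and then check the properties that make it a soft union.

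\textbf{Classical noisy-OR.} Suppose we have two binary causes, ``distributional shift'' and ``uncertainty shift''. They fire independently with probabilities $p_1=\hat{j}_t$ and $p_2=|\widehat{\Delta H}_t|$. The probability that at least one fires is
\[
1-(1-p_1)(1-p_2).
\]
This is the noisy-OR \citep{pearl2014probabilistic}.

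\textbf{Soft (weighted) version.} Write $1-g_t$ as a product of reliability-weighted ``miss'' probabilities, $(1-p_1)^{\alpha}(1-p_2)^{1-\alpha}$. Equivalently,
\[
\log(1-g_t)=\alpha\log(1-\hat{j}_t)+(1-\alpha)\log\bigl(1-|\widehat{\Delta H}_t|\bigr).
\]
So $1-g_t$ is the weighted geometric mean of the miss probabilities. This is the product-of-experts or tempered noisy-OR, with exponents acting as evidence weights. At $\alpha=\tfrac12$ it is the square root of the noisy-OR miss probability computed with each cue counted at half weight.

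\textbf{Union-like properties.} With $a=\hat{j}_t$ and $b=|\widehat{\Delta H}_t|$ in $[0,1]$, I would verify four things directly.
\begin{enumerate}
\item \emph{Range.} $g_t\in[0,1]$.
\item \emph{Monotonicity.} $g_t$ is nondecreasing in each argument, since $\partial g_t/\partial a=\alpha(1-a)^{\alpha-1}(1-b)^{1-\alpha}\ge 0$, and similarly for $b$.
\item \emph{Saturation.} If $\alpha\in(0,1)$ and either $a=1$ or $b=1$, then $g_t=1$. This is the OR behaviour that an additive fusion $\alpha a+(1-\alpha)b$ lacks.
\item \emph{Null element and floor.} $g_t=0$ iff $a=b=0$ (again for $\alpha\in(0,1)$). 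Moreover, by the weighted AM--GM inequality,
\[
(1-a)^{\alpha}(1-b)^{1-\alpha}\le \alpha(1-a)+(1-\alpha)(1-b),
\]
so $g_t\ge \alpha a+(1-\alpha)b$. The soft OR therefore dominates the additive fusion.
\end{enumerate}
Finally, I would note the limits: $\alpha\to1$ gives $g_t=a$, and $\alpha\to0$ gives $g_t=b$.

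\textbf{Main obstacle.} The statement is interpretive, so the hard part is making the noisy-OR reading precise rather than loose. I would handle this by stating the weighted product-of-miss-probabilities identity explicitly as the defining model. I would also point out that with $\alpha\neq\tfrac12$ the exponents are not probabilities of independent events. The result is a generalized (log-linear pooled) noisy-OR, and the claim holds in that sense, backed by the properties above.
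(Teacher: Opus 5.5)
Your proposal is correct, and its core is the same as the paper's proof. That proof only matches $g_t$ to the complement-product form $1-(1-p_A)(1-p_B)$ of the classical noisy-OR and checks the limits $\alpha\to 1$ and $\alpha\to 0$. You go further: you verify range, monotonicity, saturation at $1$ and the null element, and via weighted AM--GM you prove the pointwise bound $g_t\ge\alpha\hat{j}_t+(1-\alpha)|\widehat{\Delta H}_t|$. That bound turns the paper's informal remark, that the complement-product form saturates where additive fusion does not, into an actual inequality.

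Your closing caveat can be sharpened. An exact independent-events reading holds for every $\alpha$, not only $\alpha=\tfrac12$. With $a'=1-(1-\hat{j}_t)^{\alpha}$ and $b'=1-(1-|\widehat{\Delta H}_t|)^{1-\alpha}$, both in $[0,1]$, one has $g_t=1-(1-a')(1-b')$ exactly. This is the classical noisy-OR of two independent tempered cues.

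It is also worth recording a limit on the word ``union''. Since $1-g_t$ is a weighted geometric mean of the miss probabilities, $\min(\hat{j}_t,|\widehat{\Delta H}_t|)\le g_t\le\max(\hat{j}_t,|\widehat{\Delta H}_t|)$. So $g_t$ is a mean rather than a union in the t-conorm sense. Its OR-like character lies exactly in the absorbing element $1$ and in the dominance over the arithmetic mean that you establish.
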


\begin{proof}
For two independent binary events $A$ and $B$ with marginal probabilities $p_A$ and $p_B$, the classical noisy-OR–style takes the form:
\begin{equation}
P(A \cup B) = 1 - (1-p_A)(1-p_B).
\label{eq:appendix-pearl-or}
\end{equation}
VEPO's score in Eq.~(\ref{eq:appendix-prop4}) has the same complement-product structure, while introducing weights $\alpha$ and $1-\alpha$ to control the relative emphasis placed on JSD and $|\Delta H_t|$. As $\alpha \to 1$, $g_t \to \hat{j}_t$; as $\alpha \to 0$, $g_t \to |\widehat{\Delta H}_t|$. Thus, $g_t$ can be viewed as a soft union operator that prioritizes tokens that are salient under either visual cue, while still allowing one cue to dominate when appropriate.
\end{proof}

\paragraph{Remark.}
Proposition~\ref{prop:noisy-or} provides a probabilistic intuition for the multiplicative aggregation in VEPO. Unlike additive fusion, $\alpha\hat{j}_t + (1{-}\alpha)|\widehat{\Delta H}_t|$, the complement-product form saturates more naturally when either cue is strong, making it better suited for retaining tokens that spike along one visual-sensitivity axis but not necessarily the other. This interpretation is consistent with our empirical observation (Table~\ref{tab:ablation_components}) that replacing the multiplicative aggregation with additive fusion degrades performance.

\subsection{Putting It Together: The Full Scoring Function}
\label{appendix:theory:summary}

Combining Propositions~\ref{prop:jsd-eu}--\ref{prop:noisy-or},
VEPO's per-token score
\begin{flalign}
&\hspace{1.1em} c_t = g_t \cdot \hat{h}_t && \notag\\
&\hspace{0.9em}= \Bigl[
1 - (1-\hat{j}_t)^\alpha(1-|\widehat{\Delta H}_t|)^{1-\alpha}
\Bigr]\hat{h}_t . &&
\label{eq:appendix-final-score}
\end{flalign}
Here, the bracketed term acts as a soft union over the two complementary visual cues, while $\hat{h}_t$ measures token informativeness. This score admits a three-layer interpretation:
\begin{enumerate}
    \item[(i)] $\hat{j}_t$ captures visual-condition disagreement through conditional mutual information (Proposition~\ref{prop:jsd-eu});
    \item[(ii)] $|\widehat{\Delta H}_t|$ captures the magnitude of condition-wise predictive-entropy change (Proposition~\ref{prop:dh-au});
    \item[(iii)] $g_t$ combines these complementary cues through a noisy-OR-style soft union, while multiplication by $\hat{h}_t$ further prioritizes tokens that are also informative under the original (unperturbed) distribution.
\end{enumerate}
This interpretation helps explain why VEPO's joint vision-entropy token selection strategy is effective at identifying visually grounded and highly informative tokens.
\section{LLM Usage Statement}
We used Large Language Models (LLMs) as auxiliary tools to assist with the writing process. They
were used solely to polish the language and improve readability, with no influence over the research
design, experimental implementation or analysis. We conceived and executed all methodological
contributions, experiments, and conclusions independently.

\clearpage
\onecolumn

\section{Algorithm}
\label{appendix:Algorithm}
\Cref{alg:vepo} summarizes the VEPO pipeline. It first generates responses with standard rollout, then computes token-level visual-dependency signals by comparing model predictions under the original and perturbed images. Based on the resulting token scores, only the top-$k$ fraction of tokens is retained for policy gradient updates.
\vspace{0.5em}

\begin{algorithm}[h]
\caption{VEPO: Vision-Entropy Token Selection for Policy Optimization}
\label{alg:vepo}
\begin{algorithmic}[1]
\REQUIRE Policy model $\pi_\theta$; prompt set $\mathcal{D}$; original image $I$; noise step $\tau$; score weight $\alpha$; retention ratio $k$
\ENSURE Updated policy $\pi_\theta$

\FOR{each training iteration}
    \STATE \textcolor{gray}{\textit{// Stage 1: Rollout with original image}}
    \STATE Sample responses $\{o^{(k)}\}_{k=1}^{n} \sim \pi_\theta(\cdot \mid q, I)$
    \STATE Compute reward $r^{(k)}$ and group-relative advantage $A^{(k)}$

    \STATE \textcolor{gray}{\textit{// Stage 2: Counterfactual forward pass}}
    \STATE Construct perturbed image
    \STATE \hspace{1em} $I' \leftarrow \sqrt{\bar{\alpha}_\tau}\, I + \sqrt{1-\bar{\alpha}_\tau}\,\boldsymbol{\epsilon}, \quad \boldsymbol{\epsilon} \sim \mathcal{N}(0,\mathbf{I})$
    \FOR{each response $o = (o_1, \ldots, o_T)$}
        \FOR{each token position $t = 1, \ldots, T$}
            \STATE $P_t \leftarrow \pi_\theta(\cdot \mid q, I, o_{<t})$ \hfill \textcolor{gray}{// original image forward}
            \STATE $Q_t \leftarrow \pi_\theta(\cdot \mid q, I', o_{<t})$ \hfill \textcolor{gray}{// perturbed image forward}
        \ENDFOR
    \ENDFOR

    \STATE \textcolor{gray}{\textit{// Stage 3: Compute token-level visual dependency signals}}
    \FOR{each token position $t = 1, \ldots, T$}
        \STATE $M_t \leftarrow \frac{1}{2}(P_t + Q_t)$
        \STATE $\mathrm{JSD}_t \leftarrow \frac{1}{2}\mathrm{KL}(P_t \| M_t) + \frac{1}{2}\mathrm{KL}(Q_t \| M_t)$
        \STATE $H_t \leftarrow -\sum_{v} P_t(v) \log P_t(v)$
        \STATE $H_t^{\mathrm{mask}} \leftarrow -\sum_{v} Q_t(v) \log Q_t(v)$
        \STATE $\Delta H_t \leftarrow H_t^{\mathrm{mask}} - H_t$
    \ENDFOR

    \STATE \textcolor{gray}{\textit{// Stage 4: Score computation and token selection}}
    \STATE $\hat{j}_t,\; |\widehat{\Delta H}_t|,\; \hat{h}_t \leftarrow \text{MinMaxNorm}(\mathrm{JSD}_t,\; |\Delta H_t|,\; H_t)$ \hfill \textcolor{gray}{// per-response}
    \FOR{each token position $t = 1, \ldots, T$}
        \STATE $g_t \leftarrow 1 - (1-\hat{j}_t)^{\alpha} \cdot (1-|\widehat{\Delta H}_t|)^{1-\alpha}$ \hfill \textcolor{gray}{// joint visual-dependency signal}
        \STATE $c_t \leftarrow g_t \cdot \hat{h}_t$ \hfill \textcolor{gray}{// entropy-gated score}
    \ENDFOR
    \STATE $\mathcal{C}_t \leftarrow 1\!\left[c_t \in \mathrm{Top}\text{-}\lceil k \cdot T \rceil (\{c_\tau\}_{\tau=1}^{T})\right]$ \hfill \textcolor{gray}{// binary mask}

    \STATE \textcolor{gray}{\textit{// Stage 5: Masked policy gradient update}}
    \STATE $\mathcal{L}_{\mathrm{VEPO}} \leftarrow
    \dfrac{\sum_{t=1}^{T} \ell_t \cdot m_t \cdot \mathcal{C}_t}
    {\sum_{t=1}^{T} m_t \cdot \mathcal{C}_t}$
    \STATE Update $\theta \leftarrow \theta - \eta \nabla_\theta \mathcal{L}_{\mathrm{VEPO}}$
\ENDFOR
\end{algorithmic}
\end{algorithm}

\clearpage
\onecolumn
\section{Qualitative Cases}\label{sec:appendix:Qualitative Cases}
\begin{figure}[h]
\centering
\begin{tcolorbox}[
    enhanced,
    width=0.96\linewidth,
    colback=white,
    colframe=green!60!black,
    boxrule=0.7pt,
    arc=2.5mm,
    left=1.5mm,
    right=1.5mm,
    top=1.2mm,
    bottom=1.2mm,
    title={\Large\bfseries Qualitative Case 1: VEPO vs.\ Top-Entropy},
    colbacktitle=green!70!black,
    coltitle=white,
    fonttitle=\bfseries
]
\small

\begin{tcolorbox}[
    enhanced,
    colback=white,
    colframe=blue!70!black,
    boxrule=0.5pt,
    arc=1mm,
    left=1.2mm,
    right=1.2mm,
    top=1mm,
    bottom=1mm
]
\textbf{Question:} The running result of the following program is \_\_\_. \\
\textbf{Ground truth:} $\boxed{\frac{19}{3}}$

\vspace{0.4em}

\begin{minipage}[t]{0.23\linewidth}
\vspace{0pt}
\centering
\includegraphics[width=\linewidth]{figure/case/case_program_math_2.pdf}
\end{minipage}
\hfill
\begin{minipage}[t]{0.75\linewidth}
\vspace{0pt}
\small

\textbf{Legend.} 
\fcolorbox{blue!60!black}{blue!8}{\strut Both}\;
\fcolorbox{green!50!black}{green!10}{\strut VEPO-only}\;
\fcolorbox{red!65!black}{red!8}{\strut Entropy-only}

\vspace{0.4em}
\textbf{Generated response.}

\vspace{0.2em}
\texttt{<}\fcolorbox{red!65!black}{red!8}{\texttt{think}}\texttt{>}

\vspace{0.2em}
The
\fcolorbox{red!65!black}{red!8}{reasoning}
\fcolorbox{blue!60!black}{blue!8}{process}
is as follows:

\begin{enumerate}
\setlength{\itemsep}{0.15em}
\setlength{\parskip}{0pt}
\setlength{\parsep}{0pt}
    \item
    \fcolorbox{blue!60!black}{blue!8}{Initialize}
    the
    \fcolorbox{blue!60!black}{blue!8}{variables}:
    \( a = 2,\;
    \fcolorbox{red!65!black}{red!8}{\texttt{\textbackslash quad}}
    b = 3,\; c = 4 \)

    \item
    \fcolorbox{blue!60!black}{blue!8}{Assign}
    \(b\) to
    \fcolorbox{green!50!black}{green!10}{\(a\)}:
    \( a = b = 3 \)

    \item
    \fcolorbox{blue!60!black}{blue!8}{Assign}
    \fcolorbox{green!50!black}{green!10}{\(c\)}
    \fcolorbox{green!50!black}{green!10}{\(+\)}
    2 to
    \fcolorbox{green!50!black}{green!10}{\(b\)}:
    \( b = c + 2 = 4 + 2 = 6 \)

    \item
    \fcolorbox{blue!60!black}{blue!8}{Assign}
    \fcolorbox{green!50!black}{green!10}{\(b\)}
    \fcolorbox{green!50!black}{green!10}{\(+\)}
    4 to
    \fcolorbox{green!50!black}{green!10}{\(c\)}:
    \( c = b + 4 = 6 + 4 = 10 \)

    \item
    \fcolorbox{blue!60!black}{blue!8}{Calculate}
    \(d\)
    \fcolorbox{red!65!black}{red!8}{)}
    \fcolorbox{green!50!black}{green!10}{as}
    \fcolorbox{green!50!black}{green!10}{the}
    \fcolorbox{green!50!black}{green!10}{average}
    of \(a\), \(b\), and \(c\):
    \( d =
    \fcolorbox{red!65!black}{red!8}{\texttt{\textbackslash frac}}
    \{a+b+c\}\{3\}
    = \frac{3+6+10}{3}
    = \frac{19}{3} \)

    \item
    \fcolorbox{blue!60!black}{blue!8}{Print}
    \fcolorbox{green!50!black}{green!10}{the}
    value of \(d\):
    \( d = \frac{19}{3} \)
\end{enumerate}

\vspace{-0.3em}
\texttt{</}\fcolorbox{red!65!black}{red!8}{\texttt{think}}\texttt{>}
\quad
\texttt{\textbackslash}\fcolorbox{red!65!black}{red!8}{\texttt{boxed}}\texttt{\{$\frac{19}{3}$\}}
\end{minipage}
\end{tcolorbox}

\vspace{0.5em}

\begin{tcolorbox}[
    enhanced,
    colback=white,
    colframe=blue!70!black,
    boxrule=0.5pt,
    arc=1mm,
    left=1.2mm,
    right=1.2mm,
    top=1mm,
    bottom=1mm
]
\textbf{Representative Token Selections.}

\vspace{0.35em}
\textbf{Both:}
\fcolorbox{blue!60!black}{blue!8}{process}\;
\fcolorbox{blue!60!black}{blue!8}{Initialize}\;
\fcolorbox{blue!60!black}{blue!8}{variables}\;
\fcolorbox{blue!60!black}{blue!8}{Assign}\;
\fcolorbox{blue!60!black}{blue!8}{Assign}\;
\fcolorbox{blue!60!black}{blue!8}{Calculate}\;
\fcolorbox{blue!60!black}{blue!8}{Print}

\vspace{0.35em}
\textbf{VEPO-only:} 
\fcolorbox{green!50!black}{green!10}{a}\;
\fcolorbox{green!50!black}{green!10}{c}\;
\fcolorbox{green!50!black}{green!10}{+}\;
\fcolorbox{green!50!black}{green!10}{b}\;
\fcolorbox{green!50!black}{green!10}{b}\;
\fcolorbox{green!50!black}{green!10}{+}\;
\fcolorbox{green!50!black}{green!10}{c}\;
\fcolorbox{green!50!black}{green!10}{as}\;
\fcolorbox{green!50!black}{green!10}{the}\;
\fcolorbox{green!50!black}{green!10}{the}

\vspace{0.35em}
\textbf{Entropy-only:}
\fcolorbox{red!65!black}{red!8}{think}\;
\fcolorbox{red!65!black}{red!8}{reasoning}\;
\fcolorbox{red!65!black}{red!8}{:\textbackslash n\textbackslash n}\;
\fcolorbox{red!65!black}{red!8}{quad}\;
\fcolorbox{red!65!black}{red!8}{):\textbackslash n}\;
\fcolorbox{red!65!black}{red!8}{[\textbackslash n}\;
\fcolorbox{red!65!black}{red!8}{]\textbackslash n\textbackslash n}\;
\fcolorbox{red!65!black}{red!8}{)}\;
\fcolorbox{red!65!black}{red!8}{frac}\;
\fcolorbox{red!65!black}{red!8}{boxed}

\vspace{0.45em}

\end{tcolorbox}

\end{tcolorbox}
\caption{A qualitative comparison between VEPO and Top-Entropy on a visually grounded program-tracing example. The upper panel shows the program image and generated response, while the lower panel groups representative token selections into shared, VEPO-only, and entropy-only categories.}
\label{fig:case_1}
\end{figure}

\newcommand{\tokvepo}[1]{\fcolorbox{green!50!black}{green!10}{\strut\texttt{\detokenize{#1}}}}
\newcommand{\tokboth}[1]{\fcolorbox{blue!60!black}{blue!8}{\strut\texttt{\detokenize{#1}}}}
\newcommand{\tokent}[1]{\fcolorbox{red!65!black}{red!8}{\strut\texttt{\detokenize{#1}}}}
\begin{figure}[t]
\centering
\begin{tcolorbox}[
    enhanced,
    width=0.96\linewidth,
    colback=white,
    colframe=green!60!black,
    boxrule=0.7pt,
    arc=2.5mm,
    left=1.5mm,
    right=1.5mm,
    top=1.2mm,
    bottom=1.2mm,
    title={\Large\bfseries Qualitative Case 2: VEPO vs.\ Top-Entropy},
    colbacktitle=green!70!black,
    coltitle=white,
    fonttitle=\bfseries
]
\small

\begin{tcolorbox}[
    enhanced,
    colback=white,
    colframe=blue!70!black,
    boxrule=0.5pt,
    arc=1mm,
    left=1.2mm,
    right=1.2mm,
    top=1mm,
    bottom=1mm
]
\textbf{Question:} As shown in the figure, according to the given program, if the input is $x=\sqrt{3}$, then the output result is \_\_\_. \\
\textbf{Ground truth:} $\boxed{2}$

\vspace{0.4em}

\begin{minipage}[t]{0.23\linewidth}
\vspace{0pt}
\centering
\includegraphics[width=\linewidth]{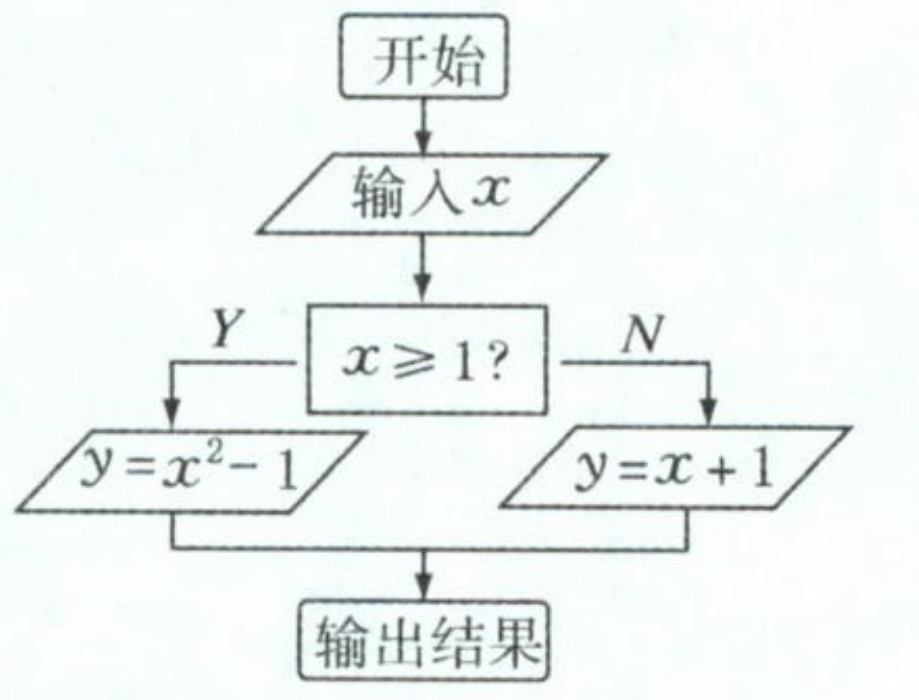}
\end{minipage}
\hfill
\begin{minipage}[t]{0.75\linewidth}
\vspace{0pt}
\small

\textbf{Legend.}
\tokboth{Both}\;
\tokvepo{VEPO-only}\;
\tokent{Entropy-only}

\vspace{0.4em}
\textbf{Generated response.}

\vspace{0.2em}
\texttt{<think>}

\vspace{0.2em}
The \tokboth{reasoning} \tokboth{process} is as \tokent{follows}:

Since the \tokboth{input} is \(x=\sqrt{3}\), we \tokboth{check} the
\tokboth{condition} \(x \ge \tokent{q} 1\).
Here, \(\sqrt{3} \approx 1.732\), \tokent{which} is
\tokvepo{greater} \tokvepo{than} 1.
Therefore, the \tokboth{condition} \(x \geq 1\) is true.

According to the \tokboth{flowchart}, \tokvepo{when} \(x \geq 1\),
the \tokboth{value} of \(y\) is \tokvepo{given} by \(y = x^2 - 1\).
So, \tokent{we} substitute \(x=\sqrt{3}\) into the \tokvepo{equation}:

\[
y = (\sqrt{3})^2 - 1 = 3 - 1 = 2
\]

Thus, the \tokboth{output} \tokboth{result} is 2.\texttt{</}\tokent{think}\texttt{>}
\quad
\texttt{\detokenize{\boxed{2}}}

\vspace{0.2em}
\end{minipage}
\end{tcolorbox}

\vspace{0.5em}

\begin{tcolorbox}[
    enhanced,
    colback=white,
    colframe=blue!70!black,
    boxrule=0.5pt,
    arc=1mm,
    left=1.2mm,
    right=1.2mm,
    top=1mm,
    bottom=1mm
]
\textbf{Representative Token Selections.}

\vspace{0.35em}
\textbf{Both:}
\tokboth{reasoning}\;
\tokboth{process}\;
\tokboth{input}\;
\tokboth{check}\;
\tokboth{condition}\;
\tokboth{flowchart}\;
\tokboth{value}\;
\tokboth{output}\;
\tokboth{result}

\vspace{0.35em}
\textbf{VEPO-only:}
\tokvepo{greater}\;
\tokvepo{than}\;
\tokvepo{when}\;
\tokvepo{given}\;
\tokvepo{equation}

\vspace{0.35em}
\textbf{Entropy-only:}
\tokent{follows}\;
\tokent{q}\;
\tokent{which}\;
\tokent{we}\;
\tokent{think}

\end{tcolorbox}

\end{tcolorbox}
\caption{Another qualitative comparison between VEPO and Top-Entropy on a visually grounded program-tracing example. }
\label{fig:case_2}
\end{figure}

\end{document}